\documentclass[twoside,11pt]{article}

%

\usepackage{jmlr2e}
\usepackage{wrapfig}
\usepackage{bm}

\usepackage{hyperref}

\usepackage{cleveref}

\usepackage{algorithm}
\usepackage{algorithmic}
\usepackage{amsmath}
\newtheorem{prop}{Proposition}



\newenvironment{customthm}[1]
{\innercustomthm}
{\endinnercustomthm}

\newcommand{\tr}{{\text{tr}}}

\newcommand{\dataset}{{\cal D}}
\newcommand{\dataMatrix}{\bm{X}}
\newcommand{\targetMatrix}{\bm{Y}}
\newcommand{\featureVector}{\bm{x}}
\newcommand{\targetVector}{\bm{y}}
\newcommand{\target}{y}
\newcommand{\featureVectorDimension}{D}
\newcommand{\targetVectorDimension}{Q}
\newcommand{\numHiddenNodes}{H} 
\newcommand{\numModules}{M}
\newcommand{\covarianceMatrix}{\Sigma}
\newcommand{\numExamples}{N}
\newcommand{\numEpochs}{T}
\newcommand{\encoder}{\bm{B}}
\newcommand{\decoder}{\bm{A}}
\newcommand{\modularAutoencoder}{\mathcal{W}}
\newcommand{\diversityParameter}{\lambda}
\newcommand{\modularLossFunction}{L}
\newcommand{\modularErrorFunction}{E}
\newcommand{\meanRegressionNetwork}{\overline{F}}
\newcommand{\identity}{\bm{I}}
\newcommand{\regressionNetwork}{F}
\newcommand{\modularRegressionNetwork}{\mathcal{F}}
\newcommand{\basisFunction}{\varphi}
\newcommand{\weightMatrix}{\bm{W}}
\newcommand{\basisFunctionParameters}{\rho}

\newcommand{\modL}{\left|\left|}
\newcommand{\modR}{\right|\right|}
\newcommand{\R}{\mathbb{R}}


\jmlrheading{1}{2015}{}{10/15}{12/15}{Henry Reeve and Gavin Brown}


\ShortHeadings{Modular Autoencoders for Ensemble Feature Extraction}{Reeve and Brown}
\firstpageno{1}

\begin{document}

\title{Modular Autoencoders for Ensemble Feature Extraction}

\author{\name Henry W J Reeve \email henrywjreeve@gmail.com \\
       \addr School of Computer Science \\
       The University of Manchester\\
       Manchester, UK
       \AND
       \name Gavin Brown \email gavin.brown@manchester.ac.uk \\
       \addr School of Computer Science \\
      The University of Manchester\\
      Manchester, UK}

\editor{Afshin Rostamizadeh}

\maketitle

\begin{abstract}
We introduce the concept of a Modular Autoencoder (MAE), capable of learning a set of diverse but complementary representations from unlabelled data, that can later be used for supervised tasks.  The learning of the representations is controlled by a trade off parameter, and we show on six benchmark datasets the optimum lies between two extremes: a set of smaller, independent autoencoders each with low capacity, versus a single monolithic encoding, outperforming an appropriate baseline. In the present paper we explore the special case of linear MAE, and derive an SVD-based algorithm which converges several orders of magnitude faster than gradient descent.
\end{abstract}

\begin{keywords}
Modularity, Autoencoders, Diversity, Unsupervised, Ensembles
\end{keywords}

\section{Introduction}
In a wide variety of Machine Learning problems we wish to extract information from high dimensional data sets such as images or documents. Dealing with high dimensional data creates both computational and statistical challenges. One approach to overcoming these challenges is to extract a small set of highly informative features. These features may then be fed into a task dependent learning algorithm. In representation learning these features are learnt directly from the data \citep{bengio2013representation}. 

We consider a \textit{modular} approach to representation learning. Rather than extracting a single set of features, we extract multiple sets of features. Each of these sets of features is then fed into a separate learning module. These modules may then be trained independently, which addresses both computational challenges, by being easily distributable, and statistical challenges, since each module is tuned to just a small set of features. The outputs of the different classifiers are then combined, giving rise to a classifier ensemble.

Ensemble methods combine the outputs of a multiplicity of models in order to obtain an enriched hypothesis space whilst controlling variance \citep{friedman2001elements}. In this work we shall apply ensemble methods to representation learning in order to extract several subsets of features for an effective classifier ensemble. Successful ensemble learning results from a fruitful trade-off between accuracy and diversity within the ensemble. Diversity is typically encouraged, either through some form of randomisation, or by encouraging diversity through supervised training \citep{brown2005diversity}. 

We investigate an unsupervised approach to learning a set of diverse but complementary representations from unlabelled data. As such, we move away from the recent trend towards coupled dimensionality reduction in which the tasks of feature extraction and supervised learning are performed in unison \cite{gonen2014coupled, storcheus2015foundations}. Whilst coupled dimensionality reduction has been shown to improve accuracy for certain classification tasks \cite{gonen2014coupled}, the unsupervised approach allows us to use unlabelled data to learn a transferable representation which may be used on multiple tasks without the need for retraining \cite{bengio2013representation}.

We show that one can improve the performance of a classifier ensemble by first learning a diverse collection of modular feature extractors in a purely unsupervised way (see Section \ref{sec: empirical accuracy sec}) and then training a set of classifiers independently. Features are extracted using a Modular Autoencoder trained to simultaneously minimise reconstruction error and maximise diversity amongst reconstructions (see Section \ref{sec: NCAEIntro}). Though the MAE framework is entirely general to any activation function, in the present paper we focus on the linear case and provide an efficient learning algorithm that converges several orders of magnitude faster than gradient descent (see Section \ref{sec: linear algo}). The training scheme involves a hyper-parameter $\diversityParameter$. We provide an upper bound on $\lambda$, enabling a meaningful trade off between reconstruction error and diversity (see Section \ref{sec: diversity parameter upper bound}).
\begin{figure}\label{fig: mod autoencoder}
	\begin{center}
		\includegraphics[scale=0.3, trim = {0cm 0.2cm 0cm 0.2cm},clip]{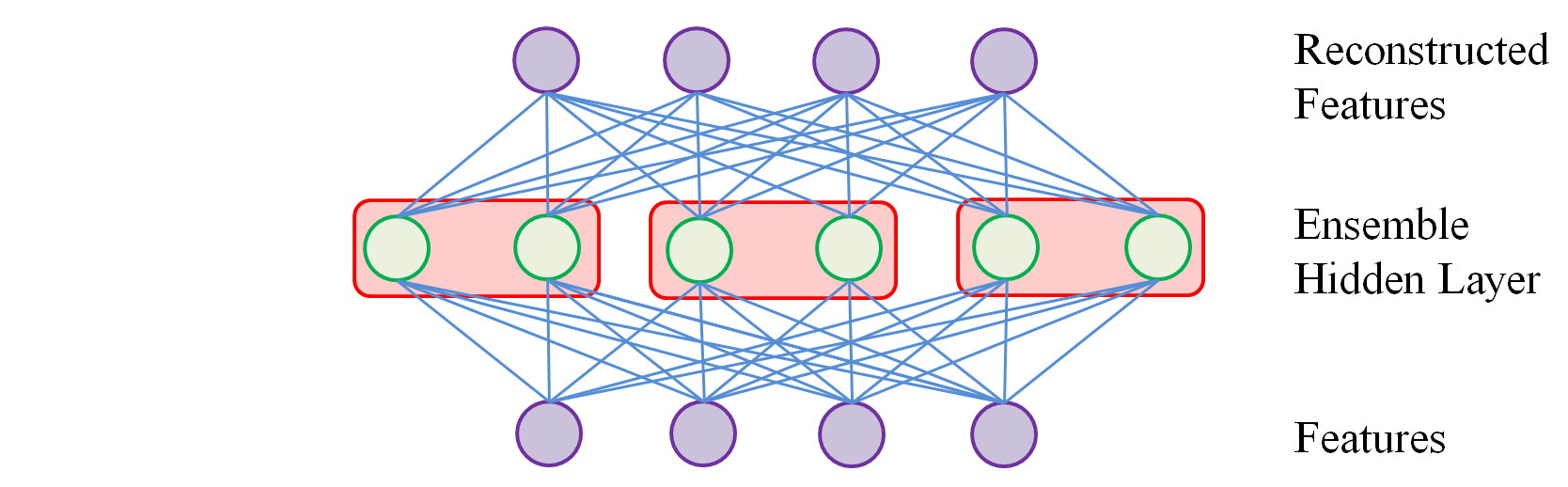}
		\caption{A Modular Autoencoder (MAE).}
	\end{center}
	\vspace{-0.5cm}
\end{figure}

\section{Modular Autoencoders}\label{sec: NCAEIntro}

A \textit{Modular Autoencoder} consists of an ensemble $\modularAutoencoder=\left\lbrace \left(\decoder_i,\encoder_i\right)\right\rbrace_{i=1}^{\numModules}$ consisting of $\numModules$ auto-encoder modules $\left(\decoder_i,\encoder_i\right)$, where each module consists of an encoder map $\encoder_i:\mathbb{R}^{\featureVectorDimension} \rightarrow \mathbb{R}^{\numHiddenNodes}$ from a $\featureVectorDimension$-dimensional feature space $\mathbb{R}^{\featureVectorDimension}$ to an $\numHiddenNodes$-dimensional representation space $\mathbb{R}^{\numHiddenNodes}$, and a decoder map $\decoder_i:\mathbb{R}^{\featureVectorDimension} \rightarrow \mathbb{R}^{\numHiddenNodes}$.  For reasons of brevity we focus on the linear case, where  $\decoder_i \in \mathbb{M}_{\featureVectorDimension \times \numHiddenNodes}\left(\mathbb{R}\right)$ and $\encoder_i\in \mathbb{M}_{\numHiddenNodes \times \featureVectorDimension}\left(\mathbb{R}\right)$ are matrices. See Figure \ref{fig: mod autoencoder}.

In order to train our Modular Autoencoders $\modularAutoencoder$ we introduce the following loss function
\begin{align}\label{NCAELoss}
\modularLossFunction_{\diversityParameter}\left(\modularAutoencoder,\featureVector\right):=&\frac{1}{\numModules}\sum_{i=1}^{\numModules}\overbrace{\modL \decoder_i\encoder_i\featureVector-\featureVector\modR^2}^{\text{reconstruction error}}-\diversityParameter \cdot \frac{1}{\numModules}\sum_{i=1}^{\numModules}\overbrace{\modL \decoder_i\encoder_i\featureVector-\frac{1}{\numModules}\sum_{j=1}^{\numModules}\decoder_j\encoder_j\featureVector\modR^2}
^{\text{diversity}},
\end{align}
for feature vectors $\featureVector\in \mathbb{R}^{\featureVectorDimension}$. The loss function $\modularLossFunction_{\diversityParameter}\left(\modularAutoencoder,\featureVector\right)$ is inspired by (but not identical to) the Negative Correlation Learning approach of by Liu and Yao for training supervised ensembles of neural networks \citep{liu1999ensemble}\footnote{See the Appendix for details.}. The first term corresponds to the squared reconstruction error typically minimised by Autoencoders \citep{bengio2013representation}. The second term encourages the reconstructions to be diverse, with a view to capturing different factors of variation within the training data. The hyper-parameter $\diversityParameter$, known as the diversity parameter, controls the degree of emphasis placed upon these two terms. We discuss its properties in Sections \ref{sec: two ext} and \ref{sec: diversity parameter upper bound}. Given a data set $\dataset \subset \mathbb{R}^{\featureVectorDimension}$ we train a Modular Autoencoder to minimise the error $\modularErrorFunction_{\diversityParameter}\left(\modularAutoencoder,\dataset\right)$, the loss function $\modularLossFunction_{\diversityParameter}\left(\modularAutoencoder,\featureVector\right)$ averaged across the data $\featureVector\in \dataset$.



\subsection{Between two extremes}\label{sec: two ext}

To understand the role of the diversity parameter $\diversityParameter$ we first look at the two extremes of $\diversityParameter = 0$ and $\diversityParameter =1$. If $\diversityParameter=0$ then no emphasis is placed upon diversity. Consequently $\modularLossFunction_{0}\left(\modularAutoencoder, \featureVector\right)$ is precisely the average squared error of the individual modules $\left(\decoder_i,\encoder_i\right)$. Since there is no interaction term, minimising $\modularLossFunction_{0}\left(\modularAutoencoder,\featureVector\right)$ over the training data is equivalent to training each of the auto-encoder modules independently, to minimise squared error. Hence, in the linear case $\modularErrorFunction_{0}\left(\modularAutoencoder,\dataset\right)$ is minimised by taking each $\encoder_i$ to be the projection onto the first $\numHiddenNodes$ principal components of the data covariance \citep{baldi1989neural}.

If $\lambda = 1$ then, by the Ambiguity Decomposition \citep{krogh1995neural},
\begin{align*}
\modularLossFunction_{1}\left(\modularAutoencoder,\featureVector\right)= \modL\frac{1}{\numModules}\sum_{i=1}^M\bm{\decoder}_i\bm{\encoder}_i\featureVector-\featureVector\modR^2.
\end{align*}
Hence, minimising $\modularLossFunction_{1}\left(\modularAutoencoder\right)$ is equivalent to minimising squared error for a single large Autoencoder $(\decoder,\encoder)$ with an $\numModules \cdot \numHiddenNodes$-dimensional hidden layer, where $\encoder = [\encoder_1^T, \cdots, \encoder_{\numModules}^T]^T$ and $\decoder={\numModules}^{-1}[\decoder_1,\cdots,\decoder_{\numModules}]$.

Consequently, moving $\diversityParameter$ between $0$ and $1$ corresponds to moving from training each of our autoencoder modules independently through to training the entire network as a single monolithic autoencoder. 

\subsection{Bounds on the diversity parameter}\label{sec: diversity parameter upper bound}

The diversity parameter $\lambda$ may be set by optimising the performance of a task-specific system using the extracted sets of features on a validation set. Theorem \ref{maeupperbound} shows that the search region may be restricted to the closed unit interval $\left[0,1\right]$.

\begin{theorem}\label{maeupperbound}
	Suppose we have a data set $\dataset$. The following dichotomy holds:
	\begin{itemize}
		\item If $\diversityParameter \leq 1$ then $\inf \modularErrorFunction_{\diversityParameter}\left(\modularAutoencoder,\dataset\right) \geq 0$.
		\item If $\lambda > 1$ then $\inf \modularErrorFunction_{\diversityParameter}\left(\modularAutoencoder,\dataset\right)  = - \infty$.
	\end{itemize}
	In both cases the infimums range over possible parametrisations for the ensemble $\modularAutoencoder$.
	
	Moreover, if the diversity parameter $\diversityParameter>1$ there exist ensembles $\modularAutoencoder$ with arbitrarily low error $\modularErrorFunction_{\diversityParameter}\left(\modularAutoencoder,\dataset\right)$ and arbitrarily high average reconstruction error.
\end{theorem}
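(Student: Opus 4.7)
The plan is to invoke the Ambiguity Decomposition in the same form the paper cites for $\diversityParameter = 1$, but applied for every $\diversityParameter$ simultaneously. Writing $\meanRegressionNetwork(\featureVector) = \numModules^{-1}\sum_j \decoder_j\encoder_j\featureVector$, the decomposition rewrites the loss as
\begin{align*}
\modularLossFunction_{\diversityParameter}(\modularAutoencoder,\featureVector) = \modL\meanRegressionNetwork(\featureVector)-\featureVector\modR^{2} + (1-\diversityParameter)\cdot\frac{1}{\numModules}\sum_{i=1}^{\numModules}\modL\decoder_i\encoder_i\featureVector-\meanRegressionNetwork(\featureVector)\modR^{2}.
\end{align*}
This identity drives the whole proof. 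The first branch of the dichotomy falls out immediately: when $\diversityParameter\leq 1$ both summands are non-negative point-wise, so averaging over $\dataset$ gives $\modularErrorFunction_{\diversityParameter}(\modularAutoencoder,\dataset)\geq 0$ for every parametrisation.

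For the $\diversityParameter>1$ branch I would construct an explicit family of ensembles, indexed by a scale $s>0$, whose mean reconstruction stays at zero while the diversity term is inflated without limit. Assume $\dataset$ is not identically zero (otherwise the statement is vacuous) and pick a unit vector $v\in\R^{\featureVectorDimension}$ with $\sum_{\featureVector\in\dataset}(v^{T}\featureVector)^{2}>0$, together with scalars $\epsilon_1,\dots,\epsilon_{\numModules}$ satisfying $\sum_i\epsilon_i=0$ but not all zero (readily achievable for $\numModules\geq 2$; for $\numModules=1$ the diversity term is identically zero and the second branch is vacuous). Choose $\decoder_i,\encoder_i$ realising the rank-one factorisation $\decoder_i\encoder_i = s\epsilon_i vv^{T}$, which is possible whenever $\numHiddenNodes\ge 1$ by placing $s\epsilon_i v$ in a single column of $\decoder_i$ and $v^{T}$ in the matching row of $\encoder_i$. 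Then $\meanRegressionNetwork(\featureVector)\equiv 0$ and the decomposition collapses to
\begin{align*}
\modularErrorFunction_{\diversityParameter}(\modularAutoencoder,\dataset) = \frac{1}{\numExamples}\sum_{\featureVector\in\dataset}\modL\featureVector\modR^{2} + (1-\diversityParameter)\,c\,s^{2}\cdot\frac{1}{\numExamples}\sum_{\featureVector\in\dataset}(v^{T}\featureVector)^{2},
\end{align*}
where $c = \numModules^{-1}\sum_i\epsilon_i^{2}>0$. Since $(1-\diversityParameter)<0$, letting $s\to\infty$ drives $\modularErrorFunction_{\diversityParameter}\to-\infty$ and proves the second bullet.

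The same family handles the \emph{moreover} clause for free: expanding $\modL s\epsilon_i(v^{T}\featureVector)v-\featureVector\modR^{2}$ and using $\sum_i\epsilon_i=0$ to cancel the cross terms, the average reconstruction error reduces to $cs^{2}\cdot\numExamples^{-1}\sum_{\featureVector}(v^{T}\featureVector)^{2}+\numExamples^{-1}\sum_{\featureVector}\modL\featureVector\modR^{2}$, which tends to $+\infty$ exactly when $\modularErrorFunction_{\diversityParameter}$ tends to $-\infty$. The real obstacle is finding a single construction serving both jobs at once; the ambiguity decomposition flags the strategy of pumping up the diversity term while pinning the mean, and the balanced rank-one family is arguably the cleanest realisation of it.
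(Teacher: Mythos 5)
Your proof is correct, and it rests on the same key identity as the paper's: the Ambiguity Decomposition, which the paper records as Proposition \ref{RewritingTheLossFunction} in the equivalent form $\modularLossFunction_{\diversityParameter} = (1-\diversityParameter)\cdot\frac{1}{\numModules}\sum_i\modL \regressionNetwork_i(\featureVector)-\targetVector\modR^2 + \diversityParameter\cdot\modL\meanRegressionNetwork(\featureVector)-\targetVector\modR^2$; the $\diversityParameter\leq 1$ branch is handled identically in both arguments. Where you genuinely differ is the divergent family for $\diversityParameter>1$. The paper (proving the more general Theorem \ref{ncupperbound1} for Modular Linear Top Layer networks) uses an \emph{unbalanced} two-module construction, $W^{(1,q)}_{11}=c(q^2+q)$ and $W^{(2,q)}_{11}=-cq^2$ with all other weights zero, so the ensemble mean grows like $q$ while the individual outputs grow like $q^2$; this gives average individual squared error $\Omega(q^4)$, ensemble squared error $\Omega(q^2)$, and hence $\modularErrorFunction_{\diversityParameter} = (1-\diversityParameter)\,\Omega(q^4)+\diversityParameter\,\Omega(q^2)\to-\infty$. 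Your balanced family with $\sum_i\epsilon_i=0$ pins the mean at zero, which makes the computation exact rather than asymptotic and is cleaner. What the paper's asymmetry buys in exchange is the stronger ``moreover'' clause of Theorem \ref{ncupperbound1}, which also demands arbitrarily high squared loss for the \emph{ensemble} output $\meanRegressionNetwork$; your construction cannot deliver that, since $\meanRegressionNetwork\equiv 0$ keeps the ensemble loss constant. For Theorem \ref{maeupperbound} as stated, which only asks for arbitrarily high \emph{average individual} reconstruction error, your family suffices. Two minor caveats you share with the paper: both constructions implicitly require $\numModules\geq 2$ and a data set containing at least one nonzero point (for an all-zero data set one has $\modularErrorFunction_{\diversityParameter}\equiv 0$, so the second bullet is actually false there rather than, as you say, vacuous), neither of which is stated as a hypothesis in the theorem.
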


Theorem \ref{maeupperbound} is a special case of Theorem \ref{ncupperbound1}, which is proved the appendix.

\section{An Efficient Algorithm for Training Linear Modular Autoencoders}\label{sec: linear algo}

One method to minimise the error $\modularErrorFunction_{\diversityParameter}\left(\modularAutoencoder,\dataset\right)$ would be to apply some form of gradient descent. However, Linear Modular Autoencoders we can make use of the Singular Value Decomposition to obtain a fast iterative algorithm for minimising the error $\modularErrorFunction_{\diversityParameter}\left(\modularAutoencoder,\dataset\right)$ (see Algorithm \ref{alg: linearNCAEalgo}).

\begin{algorithm}[h]
	\caption{Backfitting for Linear Modular Autoencoders \label{alg: linearNCAEalgo}}
	\begin{algorithmic}
		\STATE \textbf{Inputs:} $\featureVectorDimension \times \numExamples$ data matrix $\dataMatrix$, diversity parameter $\diversityParameter$, number of hidden nodes per module $\numHiddenNodes$, number of modules $\numModules$, maximal number of epochs T,
		\vspace{0.0cm}
		\STATE Randomly generate $\left\lbrace\left(\decoder_i,\encoder_i\right)\right\rbrace_{i=1}^{\numModules}$ and set $\covarianceMatrix \gets \dataMatrix \dataMatrix^T$,
		\FOR{t $ = 1 $ \TO T}
		\FOR{i $ = 1 $ \TO $M$}
		\STATE $\bm{Z}_i \gets {\numModules}^{-1}\sum_{j \neq i}\decoder_j\encoder_j$
		\STATE $\bm{\Phi} \gets  \left(\identity_{\featureVectorDimension}-\diversityParameter\cdot \bm{Z}_i\right)\covarianceMatrix\left(\identity_{\featureVectorDimension}-\diversityParameter\cdot \bm{Z}_i \right)^T$, where $\identity_{\featureVectorDimension}$ denotes the $\featureVectorDimension\times \featureVectorDimension$ identity matrix.
		\STATE ${\decoder_i} \gets [\bm{u}_1,\cdots,\bm{u}_{\numHiddenNodes}]$, where  $\left\lbrace \bm{u}_1,\cdots,\bm{u}_{\numHiddenNodes}\right\rbrace$ are the top eigenvectors of $\bm{\Phi}$.
		\STATE ${\encoder_i} \gets \left(1-\diversityParameter \cdot(\numModules-1)/\numModules\right)^{-1} \cdot {\decoder_i}^T\left(\identity_{\featureVectorDimension}-\diversityParameter\cdot \bm{Z}_i\right)$
		\ENDFOR
		\ENDFOR
		\RETURN Decoder-Encoder pairs $\left(\bm{A}_i,\bm{B}_i\right)_{i=1}^M$
	\end{algorithmic}
\end{algorithm}

Algorithm \ref{alg: linearNCAEalgo} is a simple greedy procedure reminiscent of the back-fitting algorithm for additive models \citep{friedman2001elements}. Each module is optimised in turn, leaving the parameters for the other modules fixed. The error $\modularErrorFunction_{\diversityParameter}\left(\modularAutoencoder,\dataset\right)$ decreases every epoch until a critical point is reached. 
	
\begin{theorem}\label{thm: algo prop}
	Suppose that $\covarianceMatrix=\dataMatrix \dataMatrix^T$ is of full rank. Let $\left(\modularAutoencoder_t\right)_{t=1}^{\numEpochs}$ be a sequence of parameters obtained by Algorithm \ref{alg: linearNCAEalgo}. For every epoch $t=\{1,\cdots,\numEpochs\}$, we have $E_{\lambda}\left(\modularAutoencoder_{t+1},\dataset\right)<\modularErrorFunction_{\diversityParameter}\left(\modularAutoencoder_t,\dataset\right)$, unless $\modularAutoencoder_{t}$ is a critical point for $\modularErrorFunction_{\diversityParameter}\left(\cdot, \dataset\right)$, in which case $\modularErrorFunction_{\diversityParameter}\left(\modularAutoencoder_{t+1},\dataset \right) \leq \modularErrorFunction_{\diversityParameter}\left(\modularAutoencoder_t, \dataset \right)$.
		\end{theorem}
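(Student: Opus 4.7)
The plan is to realise each inner iteration of Algorithm~\ref{alg: linearNCAEalgo} as an exact block-coordinate minimiser of $\modularErrorFunction_{\diversityParameter}(\cdot,\dataset)$ over the block $(\decoder_i,\encoder_i)$, and then combine block-coordinate monotonicity with a partial-gradient argument to handle the stationary case. First I would rewrite the loss: using $\frac{1}{\numModules}\sum_k\|\bm{y}_k-\bar{\bm{y}}\|^2=\frac{1}{\numModules}\sum_k\|\bm{y}_k\|^2-\|\bar{\bm{y}}\|^2$, where $\bm{y}_k=\decoder_k\encoder_k\featureVector$, and collecting the terms that depend on module~$i$ using the definition of $\bm{Z}_i$, one finds that with all $j\neq i$ held fixed
\begin{equation*}
\modularErrorFunction_{\diversityParameter}(\modularAutoencoder,\dataset) \;=\; \frac{\alpha}{\numModules\numExamples}\bigl\|\decoder_i\encoder_i\dataMatrix - \alpha^{-1}(\identity_{\featureVectorDimension}-\diversityParameter\bm{Z}_i)\dataMatrix\bigr\|_F^2 \;+\; c_i,
\end{equation*}
with $\alpha=1-\diversityParameter(\numModules-1)/\numModules$ (positive in the meaningful regime $\diversityParameter\leq 1$ of Theorem~\ref{maeupperbound}) and $c_i$ independent of module~$i$. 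This is a reduced-rank regression problem with target $\bm{T}_i=\alpha^{-1}(\identity_{\featureVectorDimension}-\diversityParameter\bm{Z}_i)$, to be solved over rank-$\leq\numHiddenNodes$ matrices $\bm{M}=\decoder_i\encoder_i$.

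Since $\covarianceMatrix$ has full rank, the Eckart--Young theorem identifies the minimiser (uniquely so under a strict eigengap at the $\numHiddenNodes$-th eigenvalue of $\bm{\Phi}$, which holds generically) as $\bm{M}^{*}=\decoder_i^{*}\decoder_i^{*T}\bm{T}_i$, where $\decoder_i^{*}$ stacks the top-$\numHiddenNodes$ eigenvectors of $\bm{\Phi}=(\identity_{\featureVectorDimension}-\diversityParameter\bm{Z}_i)\covarianceMatrix(\identity_{\featureVectorDimension}-\diversityParameter\bm{Z}_i)^{T}$; setting $\encoder_i^{*}=\alpha^{-1}\decoder_i^{*T}(\identity_{\featureVectorDimension}-\diversityParameter\bm{Z}_i)$ recovers exactly the update of Algorithm~\ref{alg: linearNCAEalgo}. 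Hence each of the $\numModules$ sub-steps in an epoch is a global block-partial minimiser, and $\modularErrorFunction_{\diversityParameter}$ is non-increasing across the whole epoch. For the strict-decrease claim I would argue by contrapositive: if $\modularErrorFunction_{\diversityParameter}(\modularAutoencoder_{t+1},\dataset)=\modularErrorFunction_{\diversityParameter}(\modularAutoencoder_t,\dataset)$, then because every sub-step is individually non-increasing, every sub-step must preserve the loss. Uniqueness of the reduced-rank minimiser in $\bm{M}_i$ forces each sub-step to leave $\bm{M}_i$ unchanged, so by induction on $i$ the block-partial problem solved at sub-step $i$ coincides with the one defined at $\modularAutoencoder_t$. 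Because the incoming $(\decoder_i,\encoder_i)$ already attains the smooth block-partial minimum, both partial gradients $\nabla_{\decoder_i}\modularErrorFunction_{\diversityParameter}$ and $\nabla_{\encoder_i}\modularErrorFunction_{\diversityParameter}$ vanish at $\modularAutoencoder_t$; collecting over $i$ gives $\nabla\modularErrorFunction_{\diversityParameter}(\modularAutoencoder_t,\dataset)=0$, i.e.~$\modularAutoencoder_t$ is a critical point.

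The main obstacle is this last step, translating ``the epoch preserves the loss'' into ``every block-partial gradient at $\modularAutoencoder_t$ vanishes''. The subtlety is that the $i$-th sub-step acts on parameters that may already have been modified earlier in the same epoch, so one must verify that those modifications leave each product $\bm{M}_j$ invariant before it is reused to form $\bm{Z}_i$. This reduces to uniqueness of the reduced-rank minimiser, hence to the full-rank hypothesis on $\covarianceMatrix$ together with a strict eigengap at the $\numHiddenNodes$-th eigenvalue of $\bm{\Phi}$; degenerate eigenvalue ties would require either a small perturbation argument or a more delicate treatment of the set of partial minimisers.
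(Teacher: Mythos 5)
Your proposal is correct and takes essentially the same route as the paper: the paper's Proposition \ref{FixAjBjforjneqiMinProp} likewise recasts the block subproblem as a reduced-rank regression (via a trace/complete-the-square lemma rather than your variance identity), solves it with the Baldi--Hornik result, and handles the stationary case exactly as you do, by using uniqueness of the minimising product $\decoder_i\encoder_i$ to conclude that an unchanged epoch leaves every $\bm{C}_i=\decoder_i\encoder_i$ fixed and hence that all block-partial gradients vanish. The eigengap caveat you flag for uniqueness of the rank-$\numHiddenNodes$ minimiser is a fair point, but it is a gap shared by (indeed inherited from) the paper's own appeal to Baldi--Hornik, not a defect specific to your argument.
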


Theorem \ref{thm: algo prop} justifies the procedure in Algorithm \ref{alg: linearNCAEalgo}. The proof is given in Appendix \ref{sec: deriv linear algo}. We compared Algorithm \ref{alg: linearNCAEalgo} with (batch) gradient descent on an artificial data set consisting of $1000$ data points randomly generated from a Gaussian mixture data set consisting of equally weighted spherical Gaussians with standard deviation $0.25$ and a mean drawn from a standard multivariate normal distribution. We measured the time for the cost to stop falling by at least $\epsilon = 10^{-5}$ per epoch for both Algorithm \ref{alg: linearNCAEalgo} and (batch) gradient descent. The procedure was repeated ten times. The two algorithms performed similarly in terms of minimum cost attained, with Algorithm \ref{alg: linearNCAEalgo} attaining slightly lower costs on average. However, as we can see from Table \ref{tbl: convergenceTimes}, Algorithm \ref{alg: linearNCAEalgo} converged several orders of magnitude faster than gradient descent.


\begin{table}[ht] 
	\centering
	\begin{tabular}{c c c c}
		\hline
		& Algorithm \ref{alg: linearNCAEalgo} & Gradient Descent & Speed up \\
		\hline 		
		\text{Minimum} & 0.1134 s & 455.2 s & 102.9$\times$ \\
		\text{Mean} & 1.4706 s & 672.9 s  & 1062.6$\times$ \\
		\text{Maximum} & 4.9842 s & 1871.5 s  & 6685.4$\times$ \\ 
		\hline
	\end{tabular}
		\caption{Convergence times for Algorithm \ref{alg: linearNCAEalgo} and batch gradient descent. \label{tbl: convergenceTimes}}
\end{table}

\section{Empirical results}\label{sec: empirical accuracy sec}

In this section we demonstrate the efficacy of Modular Autoencoders for extracting useful sets features for classification tasks. In particular, we demonstrate empirically that we can improve the performance of a classifier ensemble by first learning a diverse collection of modular feature extractors in an unsupervised way.

Our methodology is as follows. We take a training data set $\dataset=\left\lbrace (\featureVector_n,\target_n)\right\rbrace_{n=1}^N$ consisting of pairs of feature vectors $\featureVector_n$ and class labels $\target_n$. The data set $\dataset$ is pre-processed so each of the features have zero mean. We first train a Modular Autoencoder $\modularAutoencoder= \left(\decoder_i,\encoder_i\right)$. For each module $i$ we take $C_i$ to be the $1$-nearest neighbour classifier with the data set $\dataset_i = \left\lbrace (\encoder_i\featureVector_n,\target_n)\right\rbrace_{n=1}^{\numExamples}$. The combined prediction of the ensemble on a test point $\featureVector$ is defined by taking a modal average of the class predictions $\left\lbrace C_i(\encoder_i\featureVector)\right\rbrace_{i=1}^{\numModules}$.

We use a collection of six image data sets from \citet{larochelle2007empirical}, \textit{Basic, Rotations, Background Images} and \textit{Background Noise} variants of \textit{MNIST} as well as \textit{Rectangles} and \textit{Convex}. In each case we use a Modular Autoencoder consisting of ten modules ($\numModules=10$), each consisting of ten hidden nodes ($\numHiddenNodes=10$). The five-fold cross-validated test error is shown as a function of the diversity parameter $\diversityParameter$. We contrast with a natural baseline approach \textit{Bagging Autoencoders} (BAE) in which we proceed as described, but the modules $(\decoder_i,\encoder_i)$ are trained independently on bootstrapped samples from the data. In all cases, as the diversity parameter increases from zero the test error for features extracted using Modular Autoencoders falls well below the level attained by Bagging Autoencoders. As $\diversityParameter \rightarrow 1$ the ensemble error begins to rise, sometimes sharply.

\begin{figure}[h]
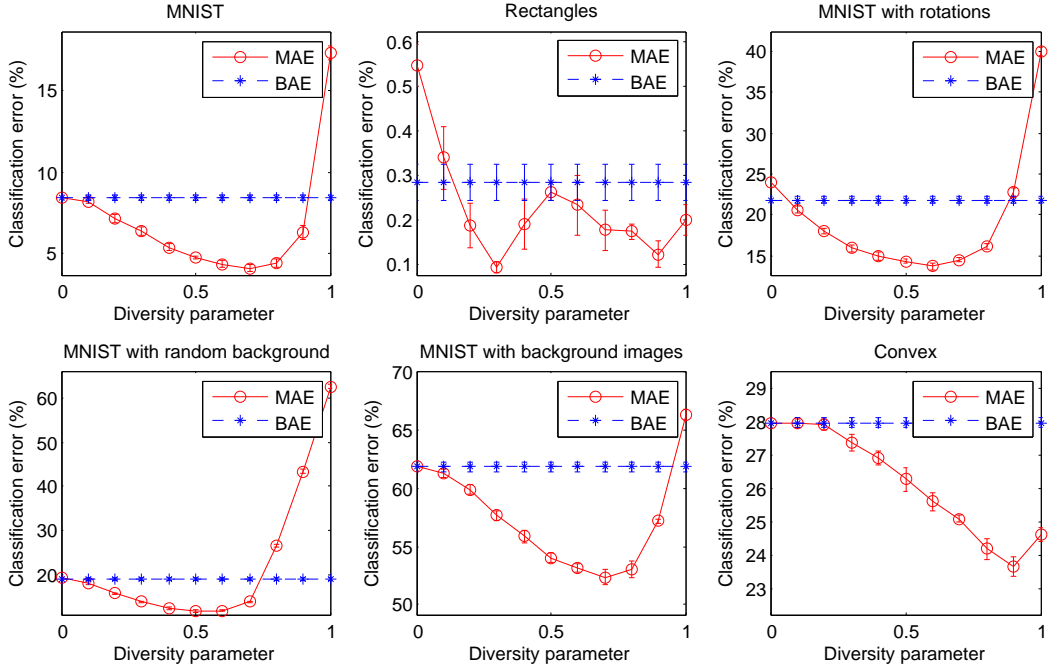

	\begin{center}
	\label{figure: ncaevs bagging mnist}
	\includegraphics[scale=0.7,trim = {0.1cm 6.4cm 0.1cm 7cm},clip]{{{"CombinedTestErrorPlots"}.pdf}}
	\vspace{-1.5cm}
	\caption{Test error for Modular Autoencoders (MAE) and Bagging Autoencoders (BAE).}
	\end{center}
\end{figure}

\section{Understanding Modular Autoencoders}\label{sec: understanding modular auto}

In this section we analyse the role of encouraging diversity in an unsupervised way with Modular Autoencoders and the impact this has upon supervised classification. 

\subsection{A more complex decision boundary}\label{subsec: 2Ddecisionboundary}

We begin by considering a simple two-dimensional example consisting of a Gaussian mixture with three clusters. In this setting we use a Linear Modular Autoencoder consisting of two modules, each with a single hidden node, so each of the feature extractors is simply a projection onto a line. We use a linear Softmax classifier on each of the extracted features. The probabilistic outputs of the individual classifiers are then combined by taking the mean average. The predicted label is defined to be the one with the highest probability.  Once again we observe the same trend as we saw in Section \ref{sec: empirical accuracy sec} - encouraging diversity leads to a substantial drop in the test error of our ensemble, with a test error of $21.3 \pm 1.3\%$ for $\lambda = 0$ and $12.8 \pm 1.0 \%$ for $\lambda = 0.5$.

To see why this is the case we contrast the features extracted when $\lambda = 0$ with those extracted when $\diversityParameter = 0.5$. Figure \ref{fig: gaussMixProjDensityDiversityZero} shows the projection of the class densities onto the two extracted features when $\lambda = 0$. No emphasis is placed upon diversity the two modules are trained independently to maximise reconstruction. Hence, the features extract identical information and there is no ensemble gain. Figure \ref{fig: 2DDecisionBoundaryNCAE} shows the resultant decision boundary; a simple linear decision boundary based upon a single one-dimensional classification. 

\begin{figure}[htb]
	\vspace{-1cm}
	\begin{center}
		\includegraphics[scale=0.6,trim = {1.5cm 10.75cm 1.5cm 9cm},clip]{{{"gaussianMixtureDiversityZeroDensities"}.pdf}}
		\vspace{-0.25cm}
		\caption{Projected class densities with $\diversityParameter=0$ \label{fig: gaussMixProjDensityDiversityZero}.
			}
	\end{center}
	
\end{figure}

\begin{figure}[htb]
	\vspace{-1cm}
	\begin{center}
		\includegraphics[scale=0.6,trim = {1.5cm 10.75cm 1.5cm 9cm},clip]{{{"gaussianMixtureDiversityHalfDensities"}.pdf}}
		\vspace{-0.25cm}
		\caption{Projected class densities with $\diversityParameter=0.5$ \label{fig: gaussMixProjDensityDiversityHalf}.
			}
	\end{center}	
\end{figure} 

\begin{figure}[htb]
	\begin{center}
		\includegraphics[scale=0.2,trim = {0 0 0 0},clip]{{{"2DDecisionBoundaryNCAE"}.png}}
		\vspace{-0.25cm}
		\caption{The decision boundary for $\diversityParameter = 0$ (left) and $\diversityParameter = 0.5$ (right)\label{fig: 2DDecisionBoundaryNCAE}.
		}
	\end{center}	
\end{figure} 

In contrast, when $\diversityParameter=0.5$ the two features yield diverse and complementary information. As we can see from Figure \ref{fig: gaussMixProjDensityDiversityHalf}, one feature separates class 1 from classes 2 and 3, and the other separates class 3 from classes 1 and 2. As we can see from the right of Figure \ref{fig: 2DDecisionBoundaryNCAE}, the resulting decision boundary accurately reflects the true class boundaries, despite being based upon two independently trained one-dimensional classifiers. This leads to the reduction in test error for $\lambda = 0.5$.

In general, Modular Autoencoders trained with the loss function defined in (\ref{NCAELoss}) extract diverse and complementary sets of features, whilst reflecting the main factors of variation within the data. Simple classifiers may be trained independently based upon these sets of features, so that the combined ensemble system gives rise to a complex decision boundary.

\subsection{Diversity of feature extractors}

In this Section we give further insight into the effect of diversity upon Modular Autoencoders. We return to the empirical framework of Section \ref{sec: empirical accuracy sec}. Figure \ref{fig: ensemble vs individual} plots two values test error for features extracted with Linear Modular Autoencoders. We plot both the average individual error of the classifiers (without ensembling the outputs) and the test error of the ensemble. In every case the average individual error rises as the diversity parameter moves away from zero. Nonetheless, the ensemble error falls as the diversity parameter increases (at least initially).

\begin{figure}[h]
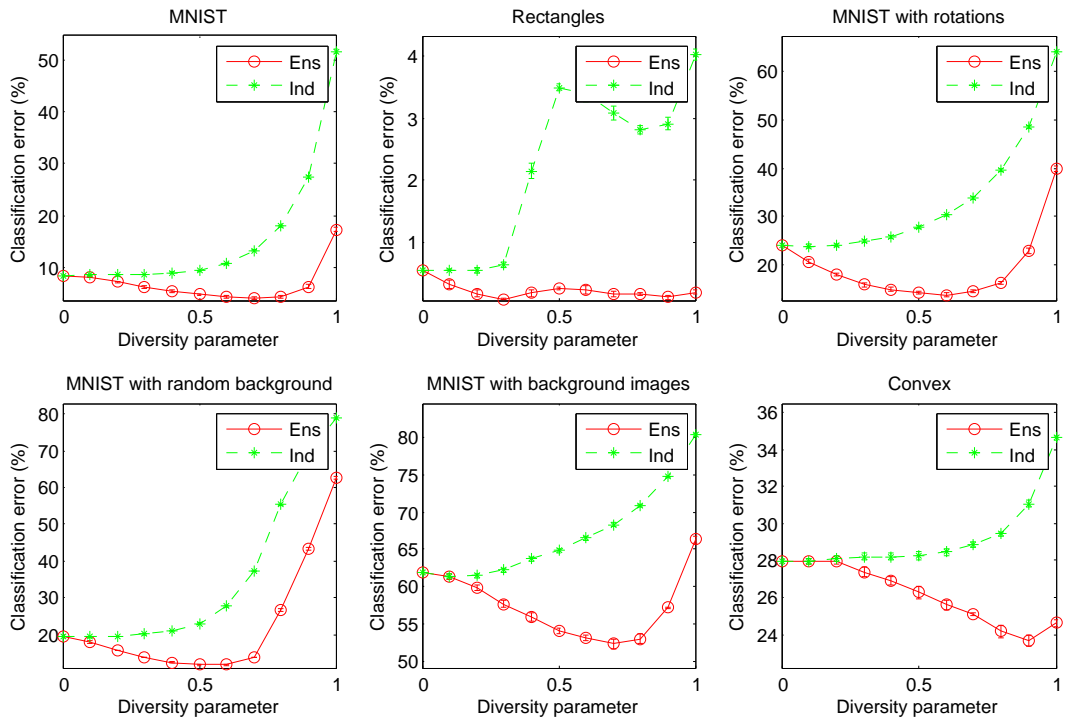

	\begin{center}
	\includegraphics[scale=0.7,trim = {0.1cm 6.4cm 0.1cm 7cm},clip]{{{"CombinedEnsembleVsIndividualErrorPlots"}.pdf}}
	\vspace{-1cm}
	\caption{Test error for the ensemble system (Ens) and the average individual error (Ind) \label{fig: ensemble vs individual}. Note that as the diversity parameter $\lambda$ increases, the individual modules sacrifice their own performance for the good of the overall set of modules - the average error rises, while the ensemble error falls.}
	\end{center}
\end{figure}

To see why the ensemble error falls whilst the average individual error rises we consider the metric structure of the different sets of extracted features. To compare the metric structure captured by different feature extractors, both with one another, and with the original feature space, we use the concept of \textit{distance correlation} introduced by  \citep{szekely2007measuring}. 

Given a feature extractor map $F$ (such as $\featureVector \mapsto \encoder_i \featureVector$) we compute $D\left(F,\mathcal{D}\right)$, the distance correlation based upon the pairs  $\left\lbrace \left(F(x),x\right): x \in \mathcal{D}\right\rbrace$. The quantity $D\left(F,\mathcal{D}\right)$ tells us how faithfully the extracted feature space for a feature map $F$ captures the metric structure of the original feature space. For each of our data sets we compute the average value of $D\left(F,\mathcal{D}\right)$ across the different feature extractors. To reduce computational cost we restrict ourselves to a thousand examples of both train and test data, $\mathcal{D}^{\text{red}}$. Figure \ref{fig: original vs extracted distance correlation} shows how the average value of $\numModules^{-1}\sum_{i=1}^MD\left(\encoder_i,\mathcal{D}^{\text{red}}\right)$ varies as a function of the diversity parameter. As we increase the diversity parameter $\diversityParameter$ we also reduce the emphasis on reconstruction accuracy. Hence, increasing $\diversityParameter$ reduces the degree to which the extracted features accurately reflect the metric structure of the original feature space. This explains the fall in individual classification accuracy we observed in Figure \ref{fig: ensemble vs individual}.

\begin{figure}[h]
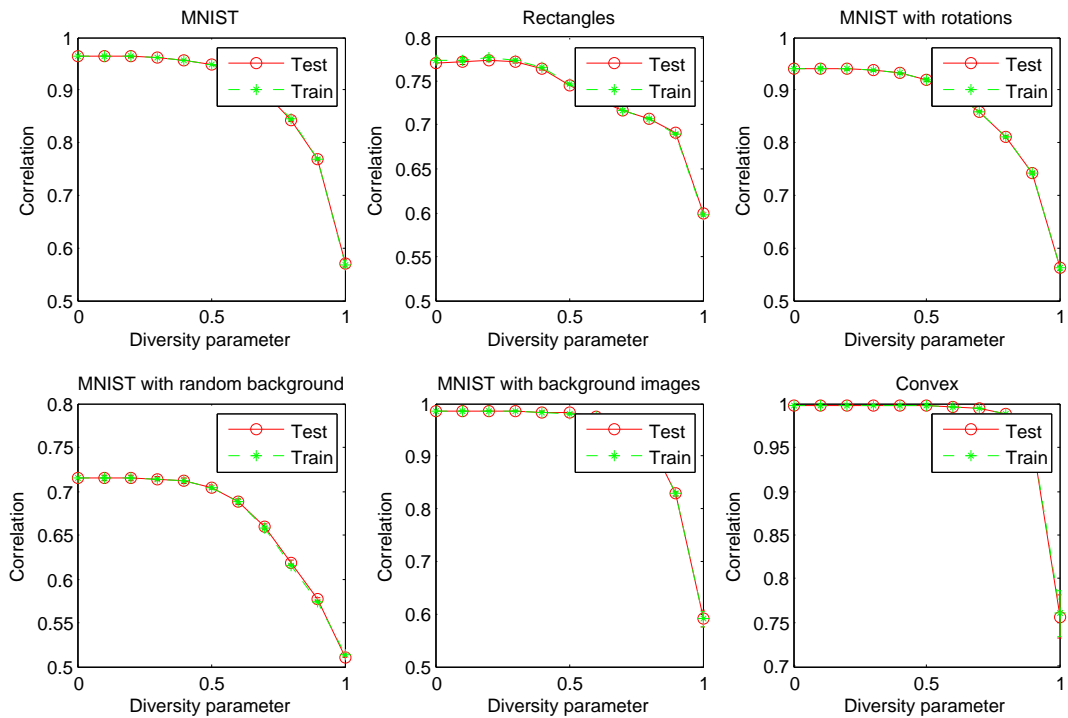

	\begin{center}
	\includegraphics[scale=0.7,trim = {0.1cm 6.4cm 0.1cm 7cm},clip]{{{"DistanceCorrelationOriginalSpaceExtractors"}.pdf}}
	\vspace{-1cm}
	\caption{Average distance correlation between extracted features.\label{fig: original vs extracted distance correlation}}
	\end{center}
\end{figure}

Given feature extractor maps $F$ and $G$ (such as $\featureVector \mapsto \encoder_i \featureVector$ and $\featureVector \mapsto \encoder_j \featureVector$), on a data set $\mathcal{D}$ we compute $C\left(F,G,\mathcal{D}\right)$, the distance correlation based upon the pairs $\left\lbrace \left(F(\bm{x}),G(\bm{x})\right): \bm{x} \in \mathcal{D}\right\rbrace$. The quantity $C\left(F,G,\mathcal{D}\right)$ gives us a measure of the correlation between the metric structures induced by $F$ and $G$. Again, to reduce computational cost we restrict ourselves to a thousand examples of both train and test data, $\mathcal{D}^{\text{red}}$. To measure the degree of diversity between our different sets of extracted features we compute the average pairwise correlation $C\left(\bm{B}_i,\bm{B}_j,\mathcal{D}^{\text{red}} \right)$, averaged across all pairs of distinct feature maps $\bm{B}_i,\bm{B}_j$ with $i \neq j$. Again we restrict ourselves to a thousand out-of-sample examples. Figure \ref{fig: pairwise extractors distance correlation} shows how the degree of metric correlation between the different sets of extracted features falls as we increase the diversity parameter $\diversityParameter$. Increasing $\diversityParameter$ places an increasing level of emphasis on a diversity of reconstructions. This diversity results in the different classifiers making different errors from one another enabling the improved ensemble performance we observed in Section \ref{sec: empirical accuracy sec}.

\begin{figure}[h]
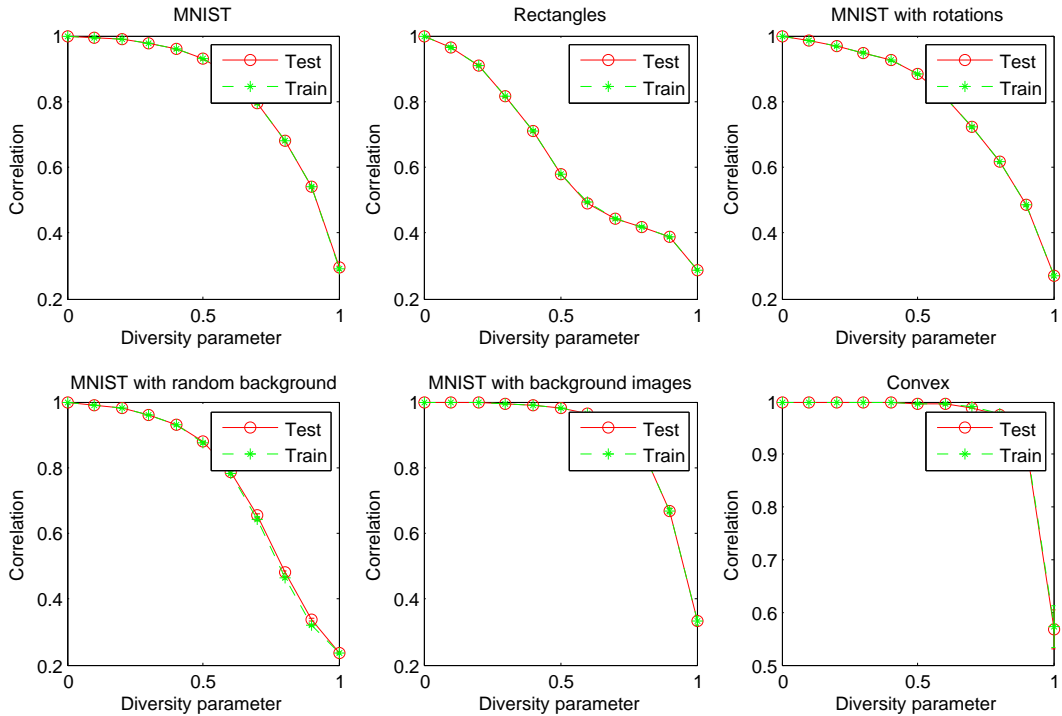

	\begin{center}
		\includegraphics[scale=0.7,trim = {0.1cm 6.4cm 0.1cm 7cm},clip]{{{"DistanceCorrelationDifferentExtractors"}.pdf}}
	\vspace{-1cm}
	\caption{Average pairwise distance correlation between different feature extractors\label{fig: pairwise extractors distance correlation}.}
		\end{center}
\end{figure}

\section{Discussion}

We have introduced a modular approach to representation learning where an ensemble of auto-encoder modules is learnt so as to achieve a diversity of reconstructions, as well as maintaining low reconstruction error for each individual module. We demonstrated empirically, using six benchmark data sets, that we can improve the performance of a classifier ensemble by first learning a diverse collection of modular feature extractors in an unsupervised way. We explored Linear Modular Autoencoders and derived an SVD-based algorithm which converges three orders of magnitude faster than gradient descent. In forthcoming work we extend this concept beyond the realm of auto-encoders and into a broader framework of modular manifold learning.

\acks{The research leading to these results has received funding from EPSRC Centre for Doctoral Training grant EP/I028099/1, the EPSRC Anyscale project EP/L000725/1 and from the AXLE project funded by the European Union's Seventh Framework Programme (FP7/2007-2013) under grant agreement no 318633. We would also like to thank Kostas Sechidis, Nikolaos Nikolaou, Sarah Nogueira and Charlie Reynolds for their useful comments, and the anonymous referee for suggesting several useful references.}

\newpage

\appendix

\section{Modular Regression Networks}\label{sec: mod reg net appendix}

We shall consider the more general framework of \textit{Modular Regression Networks} (MRN) which encompasses Modular Autoencoders (MAE). 

A Modular Regression Network $\modularRegressionNetwork = \left\lbrace \regressionNetwork_i\right\rbrace_{i=1}^{\numModules}$ is an ensemble system consisting of $\numModules$ mappings $\regressionNetwork_i:\R^{\featureVectorDimension}\rightarrow \R^{\targetVectorDimension}$. The MRN $\modularRegressionNetwork$ is trained using the following loss,
\begin{align}\label{eq: mrn loss}
\modularLossFunction_{\diversityParameter}\left(\modularRegressionNetwork,\featureVector,\targetVector\right): = \frac{1}{M}\sum_{i=1}^{\numModules}\overbrace{\modL\regressionNetwork_i\left(\featureVector\right)-\targetVector\modR^2}^{\text{error}}-\diversityParameter\cdot\frac{1}{M}\sum_{i=1}^M\overbrace{\modL \regressionNetwork_i(\featureVector)-\meanRegressionNetwork\left(\featureVector\right)\modR^2}^{\text{diversity}},
\end{align}
where $\featureVector$ is a feature vector, $\targetVector$ a corresponding output, and $\meanRegressionNetwork$ denotes the arithmetic average $\meanRegressionNetwork:=\frac{1}{\numModules}\sum_{i=1}^{\numModules}\regressionNetwork_i$. Given a data set $\dataset= \left\lbrace \left(\featureVector_n,\targetVector_n\right)\right\rbrace_{n=1}^{\numExamples}$ we let $\modularErrorFunction_{\diversityParameter}\left(\modularRegressionNetwork,\dataset\right)$ denote the loss $\modularLossFunction_{\diversityParameter}\left(\modularRegressionNetwork,\featureVector,\targetVector\right)$ averaged over $\left(\featureVector,\targetVector\right) \in \dataset$. The MRN $\modularRegressionNetwork$ is trained to minimise $\modularErrorFunction_{\diversityParameter}\left(\modularRegressionNetwork,\dataset\right)$.

\subsection{Investigating the loss function}

\begin{prop} \label{RewritingTheLossFunction}
	Given $\diversityParameter \in [0,\infty)$ and an MRN $\modularRegressionNetwork$, for each example $\left(\featureVector,\targetVector\right)$ we have
	\begin{align*}
	\modularLossFunction_{\diversityParameter}\left(\modularRegressionNetwork,\featureVector,\targetVector\right)= & \left(1-\diversityParameter\right)\cdot \frac{1}{\numModules}\sum_i\modL \regressionNetwork_i\left(\featureVector\right)-\targetVector\modR^2+\lambda \cdot \modL\meanRegressionNetwork\left(\featureVector\right)-\targetVector\modR^2.
	\end{align*}
\end{prop}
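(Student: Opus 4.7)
The plan is to reduce the proposition to a single well-known algebraic identity, the Ambiguity Decomposition that was already cited in Section~\ref{sec: two ext}, and then substitute into the definition of $\modularLossFunction_{\diversityParameter}$.

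First I would establish the pointwise identity
\begin{align*}
\frac{1}{\numModules}\sum_{i=1}^{\numModules}\modL \regressionNetwork_i(\featureVector)-\targetVector\modR^2
= \frac{1}{\numModules}\sum_{i=1}^{\numModules}\modL \regressionNetwork_i(\featureVector)-\meanRegressionNetwork(\featureVector)\modR^2
+ \modL \meanRegressionNetwork(\featureVector)-\targetVector\modR^2,
\end{align*}
which is the standard ``variance plus squared bias'' decomposition for a fixed target $\targetVector$. To verify it I would add and subtract $\meanRegressionNetwork(\featureVector)$ inside the norm on the left, expand the square, and observe that the cross-term $\frac{2}{\numModules}\sum_i \langle \regressionNetwork_i(\featureVector)-\meanRegressionNetwork(\featureVector),\, \meanRegressionNetwork(\featureVector)-\targetVector\rangle$ vanishes because $\sum_i (\regressionNetwork_i(\featureVector)-\meanRegressionNetwork(\featureVector))=0$ by the definition of $\meanRegressionNetwork$.

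Next I would solve this identity for the diversity term $\frac{1}{\numModules}\sum_i\modL\regressionNetwork_i(\featureVector)-\meanRegressionNetwork(\featureVector)\modR^2$ and substitute into the definition \eqref{eq: mrn loss} of $\modularLossFunction_{\diversityParameter}$. Collecting the two copies of $\frac{1}{\numModules}\sum_i\modL\regressionNetwork_i(\featureVector)-\targetVector\modR^2$ with coefficients $1$ and $-\lambda$ gives the $(1-\lambda)$ factor, and the remaining $+\lambda\modL\meanRegressionNetwork(\featureVector)-\targetVector\modR^2$ term falls out directly, yielding the claimed expression.

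There is no genuine obstacle here — the only thing to be careful about is the sign bookkeeping in the decomposition and checking that the identity really does hold for vector-valued $\regressionNetwork_i$ and $\targetVector$ (it does, since $\|\cdot\|^2$ is a sum of coordinate squares and the identity holds coordinatewise). The assumption $\diversityParameter\in[0,\infty)$ plays no role in the algebra; it is only relevant for later interpretation of the decomposed form (e.g. when $\diversityParameter\le 1$ both coefficients are non-negative, which is what underlies the dichotomy in Theorem~\ref{maeupperbound}).
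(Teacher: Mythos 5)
Your proposal is correct and follows essentially the same route as the paper, which simply cites the Ambiguity Decomposition of Krogh and Vedelsby; you have merely written out that decomposition explicitly (including the vanishing cross-term via $\sum_i(\regressionNetwork_i(\featureVector)-\meanRegressionNetwork(\featureVector))=0$) and performed the substitution that the paper leaves implicit. The algebra and the observation that $\diversityParameter\in[0,\infty)$ plays no role in the identity itself are both sound.
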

\begin{proof}
The result may be deduced from the Ambiguity Decomposition \citep{krogh1995neural}. 
\end{proof}

The following proposition relates MRNs to Negative Correlation Learning \citep{liu1999ensemble}.
\begin{prop}\label{mrn and ncl} Given an MRN $\modularRegressionNetwork$ and $\left(\featureVector,\targetVector\right) \in \dataset$ we have
	\begin{align*}
	\frac{\partial \modularLossFunction_{\diversityParameter}\left(\modularRegressionNetwork,\featureVector,\targetVector\right)}{\partial \regressionNetwork_i} = \frac{2}{M} \cdot 
	\left(\left(\regressionNetwork_i\left(\featureVector\right)-\targetVector\right)
	-\diversityParameter \cdot \left(\regressionNetwork_i\left(\featureVector\right)-\meanRegressionNetwork\left(\featureVector\right)\right)\right).
	\end{align*} 
\end{prop}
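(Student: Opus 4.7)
My approach is to leverage Proposition \ref{RewritingTheLossFunction}, which has just been stated and proved, in order to recast the loss in a form where the dependence on each $\regressionNetwork_i$ is transparent. This sidesteps the (minor) bookkeeping issue that $\regressionNetwork_i$ appears both in its ``own-module'' summand $\modL \regressionNetwork_i(\featureVector)-\targetVector\modR^2$ of the error term and inside the ensemble mean $\meanRegressionNetwork$, which in turn appears in every summand of the diversity term. Computing the derivative directly from the definition in \eqref{eq: mrn loss} is perfectly doable but requires invoking the chain rule twice and carefully combining $\delta_{ij}$ terms with $1/M$ factors; using Proposition \ref{RewritingTheLossFunction} makes all of this trivial.

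Concretely, I would first rewrite
\begin{align*}
\modularLossFunction_{\diversityParameter}\left(\modularRegressionNetwork,\featureVector,\targetVector\right) = (1-\diversityParameter)\cdot \frac{1}{\numModules}\sum_{j=1}^{\numModules} \modL \regressionNetwork_j(\featureVector)-\targetVector\modR^2 + \diversityParameter \cdot \modL \meanRegressionNetwork(\featureVector)-\targetVector\modR^2,
\end{align*}
by Proposition \ref{RewritingTheLossFunction}. In the first sum, only the $j=i$ summand depends on $\regressionNetwork_i$, contributing $\tfrac{2(1-\diversityParameter)}{\numModules}(\regressionNetwork_i(\featureVector)-\targetVector)$. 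In the second term, $\meanRegressionNetwork = \numModules^{-1}\sum_j \regressionNetwork_j$ depends on $\regressionNetwork_i$ with Jacobian a scalar $1/\numModules$, so the chain rule yields $\tfrac{2\diversityParameter}{\numModules}(\meanRegressionNetwork(\featureVector)-\targetVector)$. Adding the two contributions gives
\begin{align*}
\frac{\partial \modularLossFunction_{\diversityParameter}\left(\modularRegressionNetwork,\featureVector,\targetVector\right)}{\partial \regressionNetwork_i} = \frac{2}{\numModules}\left[(1-\diversityParameter)(\regressionNetwork_i(\featureVector)-\targetVector) + \diversityParameter(\meanRegressionNetwork(\featureVector)-\targetVector)\right],
\end{align*}
and the $\pm \diversityParameter \targetVector$ terms cancel, which after grouping yields $\tfrac{2}{\numModules}\bigl((\regressionNetwork_i(\featureVector)-\targetVector) - \diversityParameter(\regressionNetwork_i(\featureVector)-\meanRegressionNetwork(\featureVector))\bigr)$ as claimed. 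An equally short alternative would be to note the identity $\sum_j \modL \regressionNetwork_j(\featureVector)-\meanRegressionNetwork(\featureVector)\modR^2 = \sum_j \modL \regressionNetwork_j(\featureVector)\modR^2 - \numModules\modL \meanRegressionNetwork(\featureVector)\modR^2$, which decouples the diversity term from $\regressionNetwork_i$ except through $\regressionNetwork_i(\featureVector)$ itself and $\meanRegressionNetwork(\featureVector)$, again reducing the calculation to a single chain-rule application.

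There is no real obstacle here — this is essentially a one-line calculation once Proposition \ref{RewritingTheLossFunction} is in hand. The only conceptual point worth flagging is the interpretation of $\partial / \partial \regressionNetwork_i$: it is a functional (or output-space) derivative, evaluated at the single example $\featureVector$, and the result should be read as the direction of steepest ascent of the loss viewed as a map on $\R^{\targetVectorDimension}$ with the outputs of the other modules held fixed. With that reading, the proposition recovers precisely the Negative Correlation Learning gradient of Liu and Yao for the case of regression ensembles.
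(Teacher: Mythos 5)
Your proposal is correct: the algebra checks out, since $(1-\diversityParameter)(\regressionNetwork_i(\featureVector)-\targetVector)+\diversityParameter(\meanRegressionNetwork(\featureVector)-\targetVector)=(\regressionNetwork_i(\featureVector)-\targetVector)-\diversityParameter(\regressionNetwork_i(\featureVector)-\meanRegressionNetwork(\featureVector))$, and the two contributions you identify (the $j=i$ summand of the error term, and the $1/\numModules$ Jacobian of $\meanRegressionNetwork$ in the ensemble term) are exactly right. The paper's own proof is a one-line assertion that the identity ``follows from the definitions of $\modularLossFunction_{\diversityParameter}$ and $\meanRegressionNetwork$,'' i.e.\ it implicitly differentiates the original form \eqref{eq: mrn loss} directly, which requires tracking how $\regressionNetwork_i$ enters the diversity term both through its own summand and through $\meanRegressionNetwork$ in all $\numModules$ summands. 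Your route through Proposition \ref{RewritingTheLossFunction} reorganises the loss first so that $\regressionNetwork_i$ appears in only two places, which makes the bookkeeping trivial and is arguably the cleaner of the two computations; the direct route has the minor virtue of not depending on the earlier proposition. Your closing remark about the interpretation of $\partial/\partial\regressionNetwork_i$ as an output-space derivative with the other modules' outputs held fixed is a worthwhile clarification that the paper leaves implicit.
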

\begin{proof}
This follows from the definitions of $\modularLossFunction_{\diversityParameter}\left(\modularRegressionNetwork,\featureVector,\targetVector\right)$ and $\meanRegressionNetwork\left(\featureVector\right)$.
\end{proof}
In Negative Correlation Learning each network $\regressionNetwork_i$ is updated in parallel with rule 
	\begin{align*}
\theta^i\gets \theta^i - \alpha \cdot \frac{\partial \regressionNetwork_i}{\partial \theta^i} \left(\left(\regressionNetwork_i\left(\featureVector\right)-\targetVector\right)
	-\diversityParameter \cdot \left(\regressionNetwork_i\left(\featureVector\right)-\meanRegressionNetwork\left(\featureVector\right)\right)\right),
	\end{align*} 
for each example $\left(\featureVector,\targetVector\right) \in \dataset$ in turn, where $\theta^i$ denotes the parameters of $\regressionNetwork_i$ and $\alpha$ denotes the learning rate \citep[Equation 4]{liu1999ensemble}. By Proposition \ref{mrn and ncl} this is equivalent to training a MRN $\modularRegressionNetwork$ to minimise $\modularErrorFunction_{\diversityParameter}\left(\modularRegressionNetwork,\dataset\right)$ with stochastic gradient descent, using the learning rate $\numModules/2\cdot \alpha$.

\subsection{An upper bound on the diversity parameter}

We now focus on a particular class of networks. Suppose that there exists a vector-valued function $\basisFunction\left(\featureVector;\basisFunctionParameters\right)$, parametrised by $\basisFunctionParameters$. We assume that $\varphi$ is sufficiently expressive that for each possible feature vector $\featureVector \in \R^{\featureVectorDimension}$ there exists a choice of parameters $\basisFunctionParameters$ with $\basisFunction\left(\featureVector;\basisFunctionParameters\right) \neq \bm{0}$. Suppose that for each $i$ there exists a weight matrix $\weightMatrix^i$ and parameter vector $\basisFunctionParameters^i$ such that $\regressionNetwork_i\left(\featureVector\right) = \weightMatrix^i \basisFunction\left(\featureVector;\basisFunctionParameters^i\right)$. We refer to such networks as \textit{Modular Linear Top Layer Networks} (MLT). This is the natural choice in the context of regression and includes Modular Autoencoders with linear outputs.

\begin{theorem}\label{ncupperbound1}
Suppose we have a MLT $\modularRegressionNetwork$ and a dataset $\mathcal{D}$. The following dichotomy holds:
\begin{itemize}
\item If $\lambda \leq 1$ then $\inf \modularErrorFunction_{\diversityParameter}\left(\modularRegressionNetwork,\dataset\right) \geq 0$.
\item If $\lambda > 1$ then $\inf \modularErrorFunction_{\diversityParameter}\left(\modularRegressionNetwork,\dataset\right) = - \infty$.
\end{itemize}
In both cases the infimums range over possible parametrisations for the MRN $\modularRegressionNetwork$.

Moreover, if $\diversityParameter>1$ there exists parametrisations of $\modularRegressionNetwork$ with arbitrarily low error $\modularErrorFunction_{\diversityParameter}\left(\modularRegressionNetwork,\dataset\right)$ and arbitrarily high squared loss for the ensemble output $\meanRegressionNetwork$ and average squared loss for the individual regression networks $\regressionNetwork$.
\end{theorem}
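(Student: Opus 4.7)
The key tool is the decomposition of Proposition \ref{RewritingTheLossFunction}, which combined with the ambiguity decomposition $\tfrac{1}{\numModules}\sum_i \modL \regressionNetwork_i - \targetVector \modR^2 = \tfrac{1}{\numModules}\sum_i \modL \regressionNetwork_i - \meanRegressionNetwork \modR^2 + \modL \meanRegressionNetwork - \targetVector \modR^2$ yields the equivalent form $\modularLossFunction_{\diversityParameter}(\modularRegressionNetwork,\featureVector,\targetVector) = (1-\diversityParameter)\cdot\tfrac{1}{\numModules}\sum_i \modL \regressionNetwork_i(\featureVector)-\meanRegressionNetwork(\featureVector)\modR^2 + \modL \meanRegressionNetwork(\featureVector)-\targetVector\modR^2$. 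For the first bullet, $\diversityParameter\le 1$ makes $(1-\diversityParameter)\ge 0$ and both $\tfrac{1}{\numModules}\sum_i \modL \regressionNetwork_i-\meanRegressionNetwork\modR^2\ge 0$ and $\modL \meanRegressionNetwork-\targetVector\modR^2\ge 0$, so the per-example loss is non-negative; averaging over $\dataset$ gives $\modularErrorFunction_{\diversityParameter}(\modularRegressionNetwork,\dataset)\ge 0$.

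For $\diversityParameter>1$ the first coefficient is negative, so a construction that blows up the diversity term while keeping $\meanRegressionNetwork-\targetVector$ controlled will drive the loss to $-\infty$. Using the expressiveness assumption, pick $\featureVector_0\in\dataset$ and $\basisFunctionParameters$ with $\basisFunction(\featureVector_0;\basisFunctionParameters)\neq \bm{0}$, and let $v(\featureVector):=\weightMatrix_0\basisFunction(\featureVector;\basisFunctionParameters)$ for a matrix $\weightMatrix_0$ chosen so $v(\featureVector_0)\neq\bm{0}$ (for instance a single-row matrix aligned with $\basisFunction(\featureVector_0;\basisFunctionParameters)$). Give every module the same basis parameters $\basisFunctionParameters^i=\basisFunctionParameters$ and, with scalar parameters $c,d$, set $\weightMatrix^1=(c+\numModules d/2)\weightMatrix_0$, $\weightMatrix^2=(-c+\numModules d/2)\weightMatrix_0$, and $\weightMatrix^i=\bm{0}$ for $i\ge 3$. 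Then $\regressionNetwork_1+\regressionNetwork_2=\numModules d\cdot v$ so $\meanRegressionNetwork=d\cdot v$, while $\tfrac{1}{\numModules}\sum_i\modL \regressionNetwork_i-\meanRegressionNetwork\modR^2$ acquires a term proportional to $c^2$ that is unconstrained in $c$.

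Writing $V:=\tfrac{1}{\numExamples}\sum_n \modL v(\featureVector_n)\modR^2$ (strictly positive since $v(\featureVector_0)\neq \bm{0}$) and letting $C,Y$ denote the bounded data-dependent constants from the cross and pure-$\targetVector$ terms, direct expansion yields $\modularErrorFunction_{\diversityParameter}(\modularRegressionNetwork,\dataset) = (1-\diversityParameter)\tfrac{2V}{\numModules}c^2 + \bigl((1-\diversityParameter)\numModules/2 + \diversityParameter\bigr)Vd^2 - 2dC + Y$. For $\diversityParameter>1$ the $c^2$-coefficient is strictly negative, so for any fixed $d$ sending $c\to\infty$ drives $\modularErrorFunction_{\diversityParameter}\to -\infty$, proving the second bullet. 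For the moreover claim, both the averaged ensemble squared loss $d^2V-2dC+Y$ and the averaged individual squared loss $(2c^2/\numModules + \numModules d^2/2)V-2dC+Y$ grow at least like $d^2$, so given any threshold $K$ we first choose $d$ large enough that both exceed $K$, then choose $c$ (depending on $d$) large enough that $\modularErrorFunction_{\diversityParameter}<-K$. The main obstacle is precisely this simultaneous control: the naive mirror $\weightMatrix^1=-\weightMatrix^2$ already proves the second bullet but traps $\meanRegressionNetwork\equiv\bm{0}$ and so cannot deliver arbitrarily large ensemble squared loss; introducing the independent shift parameter $d$ is what decouples inflating $\modL \meanRegressionNetwork-\targetVector\modR^2$ from the $c$-driven cancellation that makes $\modularErrorFunction_{\diversityParameter}$ arbitrarily negative.
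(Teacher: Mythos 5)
Your proof is correct and takes essentially the same route as the paper: the $\diversityParameter \leq 1$ case via the ambiguity-decomposition rewriting of the loss (an equivalent rearrangement of Proposition \ref{RewritingTheLossFunction}), and the $\diversityParameter > 1$ case by zeroing all but two modules and giving those two large, nearly cancelling top-layer weights so the spread term blows up faster than the ensemble error. The only difference is cosmetic: the paper couples the ensemble-mean scale and the spread into a single sequence (weights $c(q^2+q)$ and $-cq^2$, so the mean grows like $q$ while the spread grows like $q^2$), whereas you decouple them into independent parameters $c$ and $d$ chosen in two stages, which makes the ``moreover'' claim slightly more transparent but is the same construction.
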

\begin{proof}
It follows from Proposition \ref{RewritingTheLossFunction} that whenever $\diversityParameter \leq 1$, $\modularLossFunction_{\diversityParameter}\left(\modularRegressionNetwork,\featureVector,\targetVector\right) \geq 0$ for all choices of $\modularRegressionNetwork$ and all $\left(\featureVector,\targetVector\right) \in \dataset$. This implies the consequent in the case where $\diversityParameter\leq 1$.

We now address the implications of $\diversityParameter>1$. 

Take $\left(\tilde{\featureVector}, \tilde{\targetVector}\right) \in \dataset$. By the (MLT) assumption we may find parameters $\basisFunctionParameters$ so that $\basisFunction\left(\tilde{\featureVector};\basisFunctionParameters\right) \neq \bm{0}$. Without loss of generality we may assume that $0 \neq c =\basisFunction_1\left(\tilde{\featureVector};\basisFunctionParameters\right)$, where $\basisFunction_1\left(\featureVector;\basisFunctionParameters\right)$ denotes the first coordinate of $\basisFunction\left(\featureVector;\basisFunctionParameters\right)$. We shall leave $\basisFunctionParameters$ fixed and obtain a sequence $\left(\modularRegressionNetwork_q\right)_{q \in \mathbb{N}}$, where for each $q$ we have $\regressionNetwork_i^q\left(\featureVector\right) = \weightMatrix^{\left(i,q\right)} \basisFunction\left(\featureVector;\basisFunctionParameters\right)$ by choosing  $\weightMatrix^{\left(i,q\right)}$.

First take $\weightMatrix^{\left(i,q\right)}=0$ for all $i=3,\cdots,\numModules$, so 
\begin{align*}
\meanRegressionNetwork\left(\featureVector\right) = \frac{1}{\numModules} \left(\regressionNetwork_1(\featureVector)+\regressionNetwork_2(\featureVector)\right).
\end{align*}
In addition we choose $W^{\left(1,q\right)}_{kl}=W^{\left(2,q\right)}_{kl}=0$ for all $k>1$ or $l>1$. Finally we take $W^{\left(1,q\right)}_{11}=c\cdot \left(q^2+q\right)$ and $W^{\left(2,q\right)}_{11}=-c\cdot q^2$. It follows that for each $\left(\featureVector,\targetVector\right) \in \dataset$ we have,
\begin{align*}
\regressionNetwork_1(\featureVector) &= \weightMatrix^{\left(1,q\right)}\basisFunction\left(\featureVector,\basisFunctionParameters\right) = \left(c (q^2+q)\basisFunction_1\left(\featureVector;\basisFunctionParameters\right),0,\cdots,0\right)\\
\regressionNetwork_1(\featureVector) &= \weightMatrix^{\left(2,q\right)}\basisFunction\left(\featureVector,\basisFunctionParameters\right) = \left( -cq^2 \basisFunction_1\left(\featureVector;\basisFunctionParameters\right),0,\cdots,0\right),\\
\meanRegressionNetwork(\featureVector) &= \left(\numModules^{-1}cq\basisFunction_1\left(\featureVector;\basisFunctionParameters\right),0,\cdots,0\right).
\end{align*}
Noting that $\basisFunction_1\left(\featureVector;\basisFunctionParameters\right)=c \neq 0$ and $\left(\tilde{\featureVector}, \tilde{\targetVector}\right) \in \dataset$ we see that,
\begin{align}\label{ensembleasymptote}
\frac{1}{\numExamples}\sum_{n=1}^{\numExamples} \left(\meanRegressionNetwork(\featureVector_n)-\targetVector_n\right)^2 = \Omega(q^2). 
\end{align}
On the other hand we have,
\begin{align*}
\frac{1}{\numExamples}\sum_{n=1}^{\numExamples} \left(\regressionNetwork_1(\featureVector_n)-\targetVector_n\right)^2 &= \Omega(q^4),\\ 
\end{align*}
and clearly for all $i$,
\begin{align*}
\frac{1}{\numExamples}\sum_{n=1}^{\numExamples} \left(\regressionNetwork_1(\featureVector_n)-\targetVector_n\right)^2 &>0.\\ 
\end{align*}
Hence,
\begin{align*}
\frac{1}{\numExamples}\sum_{n=1}^{\numExamples}\frac{1}{\numModules}\sum_{i=1}^{\numModules} \left(\regressionNetwork_1(\featureVector_n)-\targetVector_n\right)^2 =\Omega(q^4). 
\end{align*}
Combining with Equation (\ref{ensembleasymptote}) this gives 
\begin{align*}
\modularErrorFunction_{\diversityParameter}\left(\modularRegressionNetwork^q,\dataset\right) = (1-\lambda) \cdot \Omega(q^4)+\lambda \cdot \Omega(q^2).
\end{align*}
Since $\lambda>1$ this implies
\begin{align}\label{combinederrorasymptote}
\modularErrorFunction_{\diversityParameter}\left(\modularRegressionNetwork^q,\dataset\right) = -\Omega(q^4).
\end{align}
By Equations \ref{ensembleasymptote} and \ref{combinederrorasymptote} we see that for any $Q_1,Q_2>1$, by choosing $q$ sufficiently large we have
\begin{align*}
\modularErrorFunction_{\diversityParameter}\left(\modularRegressionNetwork^q,\dataset\right)<-Q_1,
\end{align*}
and
\begin{align*}
\frac{1}{\numExamples}\sum_{n=1}^{\numExamples}\frac{1}{\numModules}\sum_{i=1}^{\numModules} \left(\regressionNetwork_i(\featureVector_n)-\targetVector_n\right)^2 \geq
\frac{1}{\numExamples}\sum_{n=1}^{\numExamples} \left(\meanRegressionNetwork(\featureVector_n)-\targetVector_n\right)^2 >Q_2,
\end{align*}
This proves the second item.
\end{proof}
The impact of Theorem \ref{ncupperbound1} is that whenever $\lambda>1$, minimising $E_{\lambda}$ will result in one or more parameters diverging. Moreover, the resultant solutions may be arbitrarily bad in terms of training error, leading to very poor choices of parameters.

\begin{theorem}
Suppose we have a MLT $\modularRegressionNetwork$ on a data set $\dataset$. Suppose we choose $i \in \{1,\cdots,\numModules\}$ and fix $\regressionNetwork_j$ for all $j \neq i$. The following dichotomy holds:
\begin{itemize}
\item If $\diversityParameter < \frac{\numModules}{\numModules-1}$ then $\inf 
\modularErrorFunction_{\diversityParameter}\left(\modularRegressionNetwork,\dataset\right) > -\infty$.
\item If $\diversityParameter > \frac{\numModules}{\numModules-1}$ then $\inf \modularErrorFunction_{\diversityParameter}\left(\modularRegressionNetwork,\dataset\right)= - \infty$.
\end{itemize}
In both cases the infimums range over possible parameterisations for the function $f_i$, with $f_j$ fixed for $j \neq i$.
\end{theorem}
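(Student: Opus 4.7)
The plan is to exploit Proposition \ref{RewritingTheLossFunction} together with the observation that, when every $f_j$ with $j\neq i$ is fixed, the loss $L_\lambda$ becomes a quadratic in the free output $u = f_i(x)$, whose leading coefficient changes sign exactly at $\lambda = M/(M-1)$.

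First I would write $\meanRegressionNetwork(\featureVector) = \tfrac{1}{\numModules} f_i(\featureVector) + \meanRegressionNetwork_{-i}(\featureVector)$, where $\meanRegressionNetwork_{-i}(\featureVector):=\tfrac{1}{\numModules}\sum_{j\neq i} \regressionNetwork_j(\featureVector)$ is fixed. Substituting into the formula of Proposition \ref{RewritingTheLossFunction} and dropping all terms independent of $f_i$, the per-example loss becomes
\begin{align*}
L_\lambda(\modularRegressionNetwork,\featureVector,\targetVector) = \frac{1-\lambda}{\numModules}\modL f_i(\featureVector)-\targetVector\modR^2 + \lambda\modL \tfrac{1}{\numModules} f_i(\featureVector) + \meanRegressionNetwork_{-i}(\featureVector)-\targetVector\modR^2 + \text{const}.
\end{align*}
Expanding and collecting the coefficient of $\|f_i(\featureVector)\|^2$ gives $\tfrac{1-\lambda}{\numModules}+\tfrac{\lambda}{\numModules^2} = \tfrac{\numModules - \lambda(\numModules-1)}{\numModules^2}$, which is strictly positive if and only if $\lambda<\numModules/(\numModules-1)$ and strictly negative if and only if $\lambda>\numModules/(\numModules-1)$.

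For the case $\lambda<\numModules/(\numModules-1)$, each per-example contribution is a strictly convex quadratic in $u=f_i(\featureVector)$, so it is bounded below by the value at its unique minimiser. Since there are only $\numExamples$ training examples and each contributes a finite lower bound, the average $\modularErrorFunction_\lambda(\modularRegressionNetwork,\dataset)$ is bounded below over all choices of $u_1,\dots,u_N$, and \emph{a fortiori} over all MLT parametrisations of $f_i$.

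For the case $\lambda>\numModules/(\numModules-1)$, I would mimic the divergent construction from the proof of Theorem \ref{ncupperbound1}. Using the MLT expressivity assumption, pick $\basisFunctionParameters$ and $\tilde{\featureVector}\in\dataset$ with $c:=\basisFunction_1(\tilde{\featureVector};\basisFunctionParameters)\neq 0$. Hold $\basisFunctionParameters^i=\basisFunctionParameters$ fixed and take $\weightMatrix^i = t\cdot \weightMatrix^*$, where $\weightMatrix^*$ has a single nonzero entry in column $1$. Then $f_i(\featureVector)=t\cdot \basisFunction_1(\featureVector;\basisFunctionParameters)\cdot(\text{column of }\weightMatrix^*)$, which is nonzero at $\featureVector=\tilde{\featureVector}$. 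Plugging into the quadratic expression above, the total error has the form $\alpha t^2 + \beta t + \gamma$ with $\alpha<0$ (since at least the term from $\tilde{\featureVector}$ contributes a strictly negative $t^2$ coefficient and every other data point's contribution is nonpositive in its quadratic part). Sending $t\to\infty$ therefore drives $\modularErrorFunction_\lambda(\modularRegressionNetwork,\dataset)\to-\infty$.

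The only mildly delicate step is the second case: I need the $t^2$ coefficient summed over examples to be strictly negative, which requires at least one $\basisFunction(\featureVector_n;\basisFunctionParameters)$ to survive $\weightMatrix^*$. The choice of $\tilde{\featureVector}$ guarantees this, and, because every other per-example quadratic in $t$ is nonpositive (its leading coefficient has the same sign $\alpha<0$), no cancellation can occur. Thus the main technical work is simply the coefficient computation; the rest is careful bookkeeping.
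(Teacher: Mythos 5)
Your proposal is correct and follows essentially the same route as the paper's proof: both use Proposition \ref{RewritingTheLossFunction} to view the per-example loss as a quadratic in $\regressionNetwork_i(\featureVector)$ whose leading coefficient $\left(1-\diversityParameter\cdot\frac{\numModules-1}{\numModules}\right)$ changes sign at $\diversityParameter=\numModules/(\numModules-1)$, with the MLT expressivity assumption supplying the divergent direction in the second case. Your version is in fact somewhat more explicit than the paper's (which compresses the coefficient computation into $\Omega(\cdot)$ asymptotics), but there is no substantive difference in approach.
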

\begin{proof}
We fix $\regressionNetwork_j$ for $j \neq i$. For each pair $\left(\featureVector,\targetVector\right)\in \dataset$ we consider $\modularLossFunction_{\diversityParameter}\left(\modularRegressionNetwork\right)$ for large $||\regressionNetwork_i(\featureVector)||$. By Proposition \ref{RewritingTheLossFunction}
we have
\begin{align*}
\modularLossFunction_{\diversityParameter}\left(\modularRegressionNetwork,\featureVector,\targetVector\right) = & \left(1-\lambda\right) \cdot \frac{1}{\numModules} \Omega\left(||\regressionNetwork_i(\featureVector)||^2\right) +\diversityParameter\cdot \Omega\left(||\frac{1}{\numModules}\regressionNetwork_i(\featureVector)||^2\right)\\
= & \left(1-\lambda \cdot \frac{\numModules-1}{\numModules}\right)\cdot \Omega\left(||\regressionNetwork_i(\featureVector)||^2\right).
\end{align*}
Hence, if $\diversityParameter < \frac{\numModules}{\numModules-1}$ we see that $\modularLossFunction_{\diversityParameter}\left(\modularRegressionNetwork,\featureVector,\targetVector\right)$ is bounded from below for each example $\left(\featureVector,\targetVector\right)$, for all choices of $\regressionNetwork_i$. This implies the first case.

In addition, the fact that $\basisFunction(\featureVector_1,\basisFunctionParameters) \neq 0$ for some choice of parameters $\basisFunctionParameters$ means that we may choose a sequence of parameters such that $||\regressionNetwork_i(\featureVector)|| \rightarrow \infty$ for one or more examples $\left(\featureVector,\targetVector\right) \in \dataset$. Hence, if $\diversityParameter > \frac{\numModules}{\numModules-1}$, we may choose weights so that $\modularLossFunction_{\diversityParameter}\left(\modularRegressionNetwork,\featureVector,\targetVector\right)\rightarrow -\infty$ for some examples $\left(\featureVector,\targetVector\right) \in \dataset$. The above asymptotic formula also implies that $\modularLossFunction_{\diversityParameter}\left(\modularRegressionNetwork,\featureVector,\targetVector\right)$ is uniformly bounded from above when $\diversityParameter > \frac{\numModules}{\numModules-1}$. Thus, we have $\inf \modularErrorFunction_{\diversityParameter}\left(\modularRegressionNetwork,\dataset\right) = - \infty$.
\end{proof}

\section{Derivation of the Linear Modular Autoencoder Training Algorithm}\label{sec: deriv linear algo}
In what follows we fix $\featureVectorDimension, \numExamples$, and $\numHiddenNodes<\featureVectorDimension$ and define
\[\mathcal{C}_{\featureVectorDimension,\numHiddenNodes}:= \left\lbrace \left(\decoder,\encoder\right): \decoder \in \R^{\featureVectorDimension\times \numHiddenNodes},\encoder \in \R^{\numHiddenNodes\times \featureVectorDimension}\right\rbrace.\]
We data set $\dataset \subset \R^{\featureVectorDimension}$, with $\featureVectorDimension$ features and $\numExamples$ examples, and let $\dataMatrix$ denote the $\featureVectorDimension \times \numExamples$ matrix given by $\dataMatrix=[\featureVector_1,\cdots, \featureVector_{\numExamples}]$. 

Given any $\diversityParameter \in [0,\infty)$ we define our error function by \begin{align*}
\modularErrorFunction_{\diversityParameter}\left(\modularAutoencoder,\dataset\right)= & \frac{1}{\numExamples}\sum_{n=1}^{\numExamples}\left(\frac{1}{\numModules}\sum_{j=1}^{\numModules} \left(\modL\featureVector_n-\decoder_j\encoder_j\featureVector_n\modR^2 -\diversityParameter \cdot \modL \decoder_j\encoder_j\featureVector_n -\frac{1}{\numModules}\sum_{k=1}^MA_kB_kx_n\modR^2\right)\right)\\
&= \frac{1}{\numExamples \cdot \numModules} \sum_{i=1}^{\numModules} \left(\modL\dataMatrix-\decoder_j\encoder_j\dataMatrix\modR^2 -\diversityParameter \cdot \modL \decoder_j\encoder_j\dataMatrix -\frac{1}{\numModules}\sum_{k=1}^MA_kB_k\dataMatrix\modR^2\right),
\end{align*}
where $\modularAutoencoder= \left(\left(\decoder_i,\encoder_i\right)\right)_{i=1}^{\numModules} \in \left(\mathcal{C}_{D,P}\right)^M$, and $\modL \cdot \modR$ denotes the Frobenius matrix norm.

\begin{prop}\label{FixAjBjforjneqiMinProp} Suppose we take $\dataMatrix$ so that $\covarianceMatrix = \dataMatrix\dataMatrix^T$ has full rank $\featureVectorDimension$ and choose $\diversityParameter <\numModules/(\numModules-1)$. We pick some $i \in \{1,\cdots,\numModules\}$, and fix $\decoder_j,\encoder_j$ for each $j \neq i$. Then we find $\left(\decoder_i,\encoder_i\right)$ which minimises $\modularErrorFunction_{\diversityParameter}\left(\modularAutoencoder,\dataset\right)$ by
\begin{enumerate}
\item Taking $\decoder_i$ to be the matrix whose columns consist of the $\featureVectorDimension$ unit eigenvectors with largest eigenvalues for the matrix
\begin{align*}
\left(\identity_{\featureVectorDimension}- \frac{\diversityParameter}{\numModules}\sum_{j \neq i}\decoder_j\encoder_j\right)\covarianceMatrix \left(\identity_{\featureVectorDimension}- \frac{\lambda}{\numModules}\sum_{j \neq i}\decoder_j\encoder_j\right)^T,
\end{align*}
\item Choosing $\encoder_i$ so that
\begin{align*}
\encoder_i=\left(1-\diversityParameter \cdot\frac{\numModules-1}{\numModules}\right)^{-1} \cdot \decoder_i^T\left(\identity_{\featureVectorDimension}- \frac{\diversityParameter}{\numModules}\sum_{j \neq i}\decoder_j\encoder_j\right).
\end{align*}
\end{enumerate}
Moreover, for any other decoder-encoder pair $\left(\tilde{\decoder}_i,\tilde{\encoder}_i\right)$ which also minimises $\modularErrorFunction_{\diversityParameter}\left(\modularAutoencoder,\dataset\right)$ (with the remaining pairs $\decoder_j, \encoder_j$ fixed) we have $\tilde{\decoder}_i\tilde{\encoder}_i =\decoder_i\encoder_i$.
\end{prop}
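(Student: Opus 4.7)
The plan is to reduce the minimisation over the pair $(\decoder_i,\encoder_i)$ to a weighted low-rank approximation problem in the single $\featureVectorDimension\times\featureVectorDimension$ matrix $\bm{P} := \decoder_i\encoder_i$, and then to apply Eckart--Young. First, I would invoke Proposition~\ref{RewritingTheLossFunction} (viewing an MAE as an MRN with $\targetVector=\featureVector$ and $\regressionNetwork_j(\featureVector)=\decoder_j\encoder_j\featureVector$) to express
\[
\modularErrorFunction_{\diversityParameter}(\modularAutoencoder,\dataset) = \frac{1-\diversityParameter}{\numModules\numExamples}\sum_{j=1}^{\numModules}\modL\dataMatrix - \decoder_j\encoder_j\dataMatrix\modR^2 + \frac{\diversityParameter}{\numExamples}\modL\dataMatrix - \bar{\bm{F}}\dataMatrix\modR^2,
\]
with $\bar{\bm{F}} := \numModules^{-1}\sum_j \decoder_j\encoder_j$. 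Freezing the modules $j\neq i$ and setting $\bm{Z}_i := \numModules^{-1}\sum_{j\neq i}\decoder_j\encoder_j$, we have $\bar{\bm{F}} = \bm{Z}_i + \bm{P}/\numModules$, so only the $j=i$ term of the first sum and the single ensemble term depend on $(\decoder_i,\encoder_i)$, and only through $\bm{P}$.

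Next I would expand the two Frobenius norms and collect like terms in $\bm{P}$. The coefficient of $\modL \bm{P}\dataMatrix\modR^2$ becomes $\frac{1-\diversityParameter}{\numModules\numExamples} + \frac{\diversityParameter}{\numModules^{2}\numExamples} = \gamma/(\numModules\numExamples)$, where $\gamma := 1 - \diversityParameter(\numModules-1)/\numModules$ is strictly positive by the hypothesis $\diversityParameter<\numModules/(\numModules-1)$; using symmetry of $\covarianceMatrix$, the linear part collapses to $-\frac{2}{\numModules\numExamples}\tr\bigl(\bm{P}\covarianceMatrix(\identity_{\featureVectorDimension} - \diversityParameter\bm{Z}_i)^{T}\bigr)$. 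Completing the square in the weighted inner product $\langle \bm{X},\bm{Y}\rangle_{\covarianceMatrix}:=\tr(\bm{X}\covarianceMatrix\bm{Y}^{T})$ then gives, up to a constant in $\bm{P}$,
\[
\modularErrorFunction_{\diversityParameter}(\modularAutoencoder,\dataset) = \frac{\gamma}{\numModules\numExamples}\,\modL(\bm{P} - \bm{T})\dataMatrix\modR^2, \qquad \bm{T} := \gamma^{-1}\bigl(\identity_{\featureVectorDimension} - \diversityParameter\bm{Z}_i\bigr),
\]
reducing the problem to the rank-constrained weighted approximation $\min_{\mathrm{rank}(\bm{P})\leq\numHiddenNodes}\modL(\bm{P}-\bm{T})\dataMatrix\modR^2$.

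Finally, because $\covarianceMatrix$ has full rank, $\covarianceMatrix^{1/2}$ is invertible, and the substitution $\bm{R}:=\bm{P}\covarianceMatrix^{1/2}$ reduces this to an ordinary Eckart--Young problem $\min_{\mathrm{rank}(\bm{R})\leq\numHiddenNodes}\modL \bm{R} - \bm{T}\covarianceMatrix^{1/2}\modR^2$, solved by $\bm{R}^{\star} = \bm{U}_{\numHiddenNodes}\bm{U}_{\numHiddenNodes}^{T}(\bm{T}\covarianceMatrix^{1/2})$, with $\bm{U}_{\numHiddenNodes}$ the top $\numHiddenNodes$ left singular vectors of $\bm{T}\covarianceMatrix^{1/2}$. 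The identity $(\bm{T}\covarianceMatrix^{1/2})(\bm{T}\covarianceMatrix^{1/2})^{T} = \gamma^{-2}\bm{\Phi}$ identifies these with the top $\numHiddenNodes$ unit eigenvectors of $\bm{\Phi}$, yielding $\decoder_i=\bm{U}_{\numHiddenNodes}$; unwinding the substitution gives $\bm{P}^{\star}=\decoder_i\decoder_i^{T}\bm{T}$, and reading off $\encoder_i=\decoder_i^{T}\bm{P}^{\star}=\gamma^{-1}\decoder_i^{T}(\identity_{\featureVectorDimension}-\diversityParameter\bm{Z}_i)$ recovers the stated formulas. The uniqueness of $\decoder_i\encoder_i$ then follows because $\bm{P}^{\star}=\gamma^{-1}(\decoder_i\decoder_i^{T})(\identity_{\featureVectorDimension}-\diversityParameter\bm{Z}_i)$ depends on the decoder only through the orthogonal projector $\decoder_i\decoder_i^{T}$ onto the top-$\numHiddenNodes$ eigenspace of $\bm{\Phi}$, which Eckart--Young pins down uniquely under a strict eigengap $\mu_{\numHiddenNodes}>\mu_{\numHiddenNodes+1}$. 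I expect the main obstacle to be purely organisational: both the factor $\gamma$ and the positivity condition $\diversityParameter<\numModules/(\numModules-1)$ materialise only once the two norm expansions have been correctly recombined, and a stray factor of $\numModules$ at that stage would obscure both the completion of the square and its subsequent reduction to Eckart--Young.
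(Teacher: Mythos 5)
Your proof is correct and arrives at exactly the same reduction as the paper --- a single rank-restricted least-squares problem $\min \modL (\decoder_i\encoder_i - \bm{T})\dataMatrix\modR^2$ with the same target $\bm{T}=\gamma^{-1}(\identity_{\featureVectorDimension} - \diversityParameter\bm{Z}_i)$, $\bm{Z}_i=\numModules^{-1}\sum_{j\neq i}\decoder_j\encoder_j$, and the same positivity requirement $\gamma = 1-\diversityParameter(\numModules-1)/\numModules>0$ --- but it gets there, and out of there, by different supporting results. Where you invoke the ambiguity decomposition (Proposition~\ref{RewritingTheLossFunction}) to obtain a two-term form and then complete the square by hand in $\bm{P}=\decoder_i\encoder_i$, the paper expands the raw three-term loss and recombines the resulting weighted sum of squared Frobenius norms via Lemma~\ref{CompleteSquareTypeLemma}; the two computations are equivalent and produce the same $\gamma$ at the same point. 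For the final step the paper cites the rank-restricted regression theorem of Baldi and Hornik (Theorem~\ref{RankRestrictedLinearReg}) as a black box, whereas you re-derive it from Eckart--Young via the substitution $\bm{R} = \bm{P}\covarianceMatrix^{1/2}$ (legitimate because $\covarianceMatrix$ is full rank, so $\covarianceMatrix^{1/2}$ is invertible and the rank constraint is preserved); your identification of the left singular vectors of $\bm{T}\covarianceMatrix^{1/2}$ with the top eigenvectors of $\bm{\Phi}$ is precisely the computation that makes the cited theorem applicable, so your version is the more self-contained of the two. One further point in your favour: you correctly flag that uniqueness of the product $\decoder_i\encoder_i$ across all minimisers requires an eigengap $\mu_{\numHiddenNodes}>\mu_{\numHiddenNodes+1}$ for $\bm{\Phi}$ (equivalently a singular-value gap for $\bm{T}\covarianceMatrix^{1/2}$); the paper asserts this uniqueness in the statement but its proof never addresses it, and full rank of $\covarianceMatrix$ alone does not guarantee the gap, so that hypothesis is tacitly needed in both arguments.
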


Proposition \ref{FixAjBjforjneqiMinProp} implies the following proposition from Section \ref{sec: linear algo}.

\begin{customthm}{\ref{thm: algo prop}}
	Suppose that $\covarianceMatrix$ is of full rank. Let $\left(\modularAutoencoder_t\right)_{t=1}^{\numEpochs}$ be a sequence of parameters obtained by Algorithm \ref{alg: linearNCAEalgo}. For every epoch $t=\{1,\cdots,\numEpochs\}$, we have $E_{\lambda}\left(\modularAutoencoder_{t+1},\dataset\right)<\modularErrorFunction_{\diversityParameter}\left(\modularAutoencoder_t,\dataset\right)$, unless $\modularAutoencoder_{t}$ is a critical point for $\modularErrorFunction_{\diversityParameter}\left(\cdot, \dataset\right)$, in which case $\modularErrorFunction_{\diversityParameter}\left(\modularAutoencoder_{t+1},\dataset \right) \leq \modularErrorFunction_{\diversityParameter}\left(\modularAutoencoder_t, \dataset \right)$.
\end{customthm}

\begin{proof}
 By Proposition \ref{FixAjBjforjneqiMinProp}, each update in Algorithm \ref{alg: linearNCAEalgo} modifies a decoder-encoder pair $(\decoder_i,\encoder_i)$ so as to minimise $\modularErrorFunction_{\diversityParameter}\left(\modularAutoencoder,\dataset \right)$, subject to the condition that $(\decoder_j,\encoder_j)$ remain fixed for $j \neq i$. Hence, $E_{\lambda}\left(\modularAutoencoder_{t+1},\dataset\right)\leq \modularErrorFunction_{\diversityParameter}\left(\modularAutoencoder_t,\dataset\right)$. 
 
 Now suppose $E_{\lambda}\left(\modularAutoencoder_{t+1},\dataset\right)= \modularErrorFunction_{\diversityParameter}\left(\modularAutoencoder_t,\dataset\right)$ for some $t$. Note that $E_{\lambda}\left(\modularAutoencoder_{t+1},\dataset\right)$ is a function of $\mathcal{C}=\left\lbrace \bm{C}_i \right\rbrace_{i=1}^{\numModules}$ where $\bm{C}_i = \bm{A}_i\bm{B}_i$ for $i=1,\cdots, \numModules$. We shall show that $\mathcal{C}_t$ is a critical point in terms for $\modularErrorFunction_{\diversityParameter}$. Since  $E_{\lambda}\left(\modularAutoencoder_{t+1},\dataset\right)= \modularErrorFunction_{\diversityParameter}\left(\modularAutoencoder_t,\dataset\right)$ we must have $\bm{C}_i^{t+1} = \bm{C}_i^t$ for $i =1,\cdots,M$. Indeed, Proposition \ref{FixAjBjforjneqiMinProp} implies that Algorithm \ref{alg: linearNCAEalgo} only modifies $\bm{C}_i$ when $E_{\lambda}\left(\modularAutoencoder,\dataset\right)$ is reduced (although the individual matrices $\decoder_i$ and $\encoder_i$ may be modified). Since $\bm{C}_i^{t+1} = \bm{C}_i^t$ we may infer that $\bm{C}_i^{t}$ attains the minimum value of $E_{\lambda}\left(\modularAutoencoder,\dataset\right)$ over the set of parameters such that $\bm{C}_j=\bm{C}_j^t$ for all $j \neq i$. Hence, at the point $\mathcal{C}_t$ we have ${\partial \modularErrorFunction_{\diversityParameter}}/{\partial \bm{C}_i}=0$ for each $i=1,\cdots, M$. Thus, ${\partial \modularErrorFunction_{\diversityParameter}}/{\partial \decoder_i}=0$ and ${\partial \modularErrorFunction_{\diversityParameter}}/{\partial \encoder_i}=0$, for each $i$, by the chain rule. 
\end{proof}

To prove Proposition \ref{FixAjBjforjneqiMinProp} we require two intermediary lemmas. The first is a theorem concerning  Rank Restricted Linear Regression.

\begin{theorem}\label{RankRestrictedLinearReg} Suppose we have $\featureVectorDimension\times \numExamples$ data matrices $\dataMatrix,\targetMatrix$. We define a function $\modularErrorFunction:\mathcal{C}_{\featureVectorDimension,\numHiddenNodes} \rightarrow \R$ by
\begin{align*}
\modularErrorFunction\left(\decoder,\encoder\right) = \modL \targetMatrix-\decoder \encoder \dataMatrix \modR^2.
\end{align*}
Suppose that the matrix $\dataMatrix \dataMatrix^T$ is invertible and define $\Sigma := (\targetMatrix \dataMatrix^T)(\dataMatrix \dataMatrix^T)^{-1}(\dataMatrix \targetMatrix^T)$. Let $\bm{U}$ denote the $\numExamples\times \featureVectorDimension$ matrix who's columns are the $\featureVectorDimension$ unit eigenvectors of $\Sigma$ with largest eigen-values. Then the minimum for $\modularErrorFunction$ is attained by taking,
\begin{align*}
\decoder&=\bm{U}\\
\encoder& = \bm{U}^T (\targetMatrix \dataMatrix^T)(\dataMatrix \dataMatrix^T)^{-1}.
\end{align*}
\end{theorem}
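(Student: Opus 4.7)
My plan is to prove this via a two-stage optimisation: first optimise $\encoder$ with $\decoder$ held fixed, substitute the optimum back in, and then optimise over $\decoder$ with the residual reduced to a trace maximisation that is solved by the Ky Fan / Courant--Fischer characterisation.

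First I would observe that the product $\decoder \encoder$ is invariant under the substitution $(\decoder,\encoder) \mapsto (\decoder \bm{P}, \bm{P}^{-1}\encoder)$ for any invertible $\bm{P}\in \mathbb{R}^{\numHiddenNodes \times \numHiddenNodes}$, so using a QR-style factorisation I can assume without loss of generality that $\decoder$ has orthonormal columns, i.e. $\decoder^T \decoder = \identity_{\numHiddenNodes}$ (any rank-deficient case is handled by a density/continuity argument on the minimum value). With $\decoder$ fixed, the function $\encoder \mapsto \modL \targetMatrix - \decoder \encoder \dataMatrix \modR^2$ is a strictly convex quadratic in $\encoder$ (using invertibility of $\dataMatrix \dataMatrix^T$), and setting the gradient to zero yields $\decoder^T \decoder \encoder \dataMatrix \dataMatrix^T = \decoder^T \targetMatrix \dataMatrix^T$, so the unique minimiser is
\begin{align*}
\encoder^\star(\decoder) = \decoder^T \targetMatrix \dataMatrix^T (\dataMatrix \dataMatrix^T)^{-1}.
\end{align*}

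Next I would substitute $\encoder^\star(\decoder)$ back into the objective. Writing $\bm{P}_X := \dataMatrix^T (\dataMatrix \dataMatrix^T)^{-1} \dataMatrix$ for the orthogonal projection onto the row space of $\dataMatrix$, the residual becomes
\begin{align*}
\targetMatrix - \decoder \decoder^T \targetMatrix \bm{P}_X \;=\; (\identity - \decoder \decoder^T)\,\targetMatrix \bm{P}_X \;+\; \targetMatrix(\identity - \bm{P}_X).
\end{align*}
The two summands are orthogonal in the Frobenius inner product (one lies in the row space of $\dataMatrix$, the other in its orthogonal complement), so Pythagoras gives
\begin{align*}
\modularErrorFunction(\decoder,\encoder^\star(\decoder)) = \modL \targetMatrix(\identity - \bm{P}_X)\modR^2 + \modL (\identity - \decoder \decoder^T)\,\targetMatrix \bm{P}_X \modR^2.
\end{align*}
The first term is independent of $\decoder$; expanding the second as a trace and using $\targetMatrix \bm{P}_X \targetMatrix^T = \covarianceMatrix$ from the statement reduces it to $\tr(\covarianceMatrix) - \tr(\decoder^T \covarianceMatrix \decoder)$.

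Finally, minimising the objective over $\decoder$ is therefore equivalent to maximising $\tr(\decoder^T \covarianceMatrix \decoder)$ subject to $\decoder^T \decoder = \identity_{\numHiddenNodes}$. Since $\covarianceMatrix$ is symmetric positive semi-definite, the Ky Fan maximum principle (a direct consequence of the Courant--Fischer min-max characterisation) tells us this is maximised by choosing the columns of $\decoder$ to be unit eigenvectors corresponding to the $\numHiddenNodes$ largest eigenvalues of $\covarianceMatrix$, i.e.\ $\decoder = \bm{U}$. The claimed $\encoder = \bm{U}^T (\targetMatrix \dataMatrix^T)(\dataMatrix \dataMatrix^T)^{-1}$ then follows from $\encoder^\star(\bm{U})$. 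The main subtlety in this plan is the orthogonal decomposition step, where one has to verify that the two Frobenius summands really are orthogonal (a short calculation using $\bm{P}_X^2 = \bm{P}_X = \bm{P}_X^T$), and the non-uniqueness caveat: only the product $\decoder \encoder$ is pinned down, so the final statement of Proposition \ref{FixAjBjforjneqiMinProp} is about $\decoder_i \encoder_i$ rather than about the individual matrices.
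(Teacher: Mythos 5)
Your argument is correct, but it is worth noting that the paper does not actually prove this statement at all: its ``proof'' is a one-line citation to Fact 4 of Baldi and Hornik (1989), so you have supplied the self-contained derivation that the paper outsources. Your route is the standard one for reduced-rank regression and it goes through cleanly: the reduction to orthonormal $\decoder$ via the $(\decoder\bm{P},\bm{P}^{-1}\encoder)$ invariance is legitimate (and in fact needs no continuity argument --- any rank-$\leq\numHiddenNodes$ product $\decoder\encoder$ can be refactored exactly with an orthonormal left factor by padding a QR or SVD factorisation); the normal equations give $\encoder^{\star}(\decoder)=\decoder^T\targetMatrix\dataMatrix^T(\dataMatrix\dataMatrix^T)^{-1}$; the Pythagorean split is valid because $\bm{P}_X(\identity-\bm{P}_X)=0$ kills the cross trace; and since $\identity-\decoder\decoder^T$ is itself an orthogonal projection, the $\decoder$-dependent term collapses to $\tr(\covarianceMatrix)-\tr(\decoder^T\covarianceMatrix\decoder)$, which Ky Fan maximises at the top $\numHiddenNodes$ eigenvectors. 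One small service your write-up performs implicitly: the theorem as stated contains typos (it calls $\bm{U}$ an $\numExamples\times\featureVectorDimension$ matrix with ``the $\featureVectorDimension$ unit eigenvectors''), whereas the correct reading, which you use, is that $\bm{U}$ is $\featureVectorDimension\times\numHiddenNodes$ with the top $\numHiddenNodes$ eigenvectors of the $\featureVectorDimension\times\featureVectorDimension$ matrix $\covarianceMatrix$. Your closing remark on non-uniqueness --- that only $\decoder\encoder$ is determined --- matches the paper's own comment immediately after the theorem.
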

\begin{proof}
See \citet[Fact 4]{baldi1989neural}.
\end{proof}
Note that the minimal solution is not unique. Indeed if $\decoder,\encoder$ attain the minimum, then so does $\decoder\bm{C}$, $\bm{C}^{-1}\encoder$ for any invertible $\numHiddenNodes\times \numHiddenNodes$ matrix $\bm{C}$.

\begin{lemma}\label{CompleteSquareTypeLemma} Suppose we have $\featureVectorDimension\times \numExamples$ matrices $\dataMatrix$ and $\targetMatrix_1,\cdots, \targetMatrix_Q$, and scalars $\alpha_1,\cdots,\alpha_Q$ such that $\sum_{q=1}^Q\alpha_q>0$.   
Then we have
\begin{align*}
\arg & \min_{\left(\decoder,\encoder\right) \in \mathcal{C}_{\featureVectorDimension,\numHiddenNodes}} \left\lbrace \sum_{q=1}^Q\alpha_q ||\targetMatrix_q-\decoder\encoder\dataMatrix||^2\right\rbrace\\
&=\arg \min_{\left(\decoder,\encoder\right) \in \mathcal{C}_{\featureVectorDimension,\numHiddenNodes}} \left\lbrace ||\left(\sum_{q=1}^Q\tilde{\alpha}_q\targetMatrix\right)-\decoder\encoder\dataMatrix||^2 \right\rbrace,
\end{align*}
where $\tilde{\alpha}_q=\alpha_q/\left(\sum_{q'=1}^Q\alpha_{q'}\right)$ .
\end{lemma}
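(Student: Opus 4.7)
The approach is the standard completion-of-squares trick: I will expand the weighted sum of squared Frobenius norms and show that, up to an additive constant independent of $\left(\decoder,\encoder\right)$ and a positive multiplicative constant, it equals $\modL \left(\sum_q \tilde{\alpha}_q \targetMatrix_q\right) - \decoder\encoder\dataMatrix \modR^2$. Since neither operation affects the argmin, the two minimisation problems share the same optimal set.

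Concretely, write $S := \sum_{q=1}^Q \alpha_q > 0$ and $\bar{\targetMatrix} := \sum_{q=1}^Q \tilde{\alpha}_q \targetMatrix_q = S^{-1}\sum_{q=1}^Q \alpha_q \targetMatrix_q$. Using the identity $\modL \bm{M} \modR^2 = \tr(\bm{M}\bm{M}^T)$, I would expand
\begin{align*}
\sum_{q=1}^{Q}\alpha_q \modL \targetMatrix_q - \decoder\encoder\dataMatrix \modR^2 = \sum_{q=1}^{Q}\alpha_q \tr(\targetMatrix_q \targetMatrix_q^T) - 2\,\tr\!\Big(\big(\sum_{q}\alpha_q \targetMatrix_q\big)\dataMatrix^T\encoder^T\decoder^T\Big) + S \cdot \tr(\decoder\encoder\dataMatrix\dataMatrix^T\encoder^T\decoder^T).
\end{align*}
The crucial observation is that the linear-in-$\decoder\encoder$ term depends on the matrices $\targetMatrix_q$ only through $\sum_q \alpha_q \targetMatrix_q = S\bar{\targetMatrix}$, and the quadratic term is simply $S$ times the quadratic term one would obtain from a single-target problem.

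Performing the analogous expansion of $S \cdot \modL \bar{\targetMatrix} - \decoder\encoder\dataMatrix \modR^2$ and subtracting, the cross term and the quadratic term cancel exactly, leaving
\begin{align*}
\sum_{q=1}^{Q}\alpha_q \modL \targetMatrix_q - \decoder\encoder\dataMatrix \modR^2 = S \cdot \modL \bar{\targetMatrix} - \decoder\encoder\dataMatrix \modR^2 + \underbrace{\Big(\sum_{q=1}^{Q}\alpha_q \tr(\targetMatrix_q \targetMatrix_q^T) - S \cdot \tr(\bar{\targetMatrix}\bar{\targetMatrix}^T)\Big)}_{\text{constant in }(\decoder,\encoder)}.
\end{align*}
Because $S>0$ and the residual term is independent of $(\decoder,\encoder)$, the two objective functions are related by an increasing affine transformation on the variable part, so they have identical argmin sets, which is the claimed equality.

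There is no real obstacle here; the statement is essentially a bias–variance style identity and the only bookkeeping to check is that the cross terms genuinely collapse to $S\bar{\targetMatrix}$ (which is guaranteed by linearity of the trace) and that the hypothesis $\sum_q \alpha_q > 0$ is used exactly where we need positivity of the multiplicative constant $S$. Note that the scalars $\alpha_q$ are not assumed nonnegative individually, so one should resist the temptation to interpret $\tilde{\alpha}_q$ as a probability vector; only the sum being positive is used.
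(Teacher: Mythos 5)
Your proof is correct and follows essentially the same route as the paper's: expand the weighted sum via $\modL\bm{M}\modR^2=\tr(\bm{M}\bm{M}^T)$, observe that the cross term depends on the targets only through $\sum_q\alpha_q\targetMatrix_q$, and conclude that the two objectives differ by a positive scalar multiple and an additive constant, neither of which affects the argmin. Your explicit remark that the $\alpha_q$ need not be individually nonnegative is a nice touch, since the paper does invoke the lemma with a negative coefficient.
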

\begin{proof}
We use the fact that under the Frobenius matrix norm, $||\bm{M}||^2 = \tr(\bm{M}\bm{M}^T)$ for matrices $\bm{M}$, where $\tr$ denotes the trace operator. Note also that the trace operator is linear and invariant under matrix transpositions. Hence, we have
\begin{align*}
\sum_{q=1}^Q\alpha_q \modL\targetMatrix_q-\decoder\encoder\dataMatrix\modR^2
\end{align*}
\begin{align*}
=& \sum_{q=1}^Q\alpha_q \cdot \tr\left((\targetMatrix_q-\decoder\encoder\dataMatrix)(\targetMatrix_q-\decoder\encoder\dataMatrix)^T\right)\\
= & \sum_{q=1}^Q \alpha_q \cdot \tr\left(\targetMatrix_q\targetMatrix_q^T-2(\decoder\encoder)\dataMatrix\targetMatrix_q^T+(\decoder\encoder)\dataMatrix \dataMatrix(\decoder\encoder)^T\right)\\
= & \sum_{q=1}^Q\alpha_q \modL \targetMatrix_q\modR^2- \tr\left(2(\decoder\encoder)\dataMatrix\left(\sum_{q=1}^Q\alpha_q \targetMatrix_q\right)^T\right)+ \tr\left( \left(\sum_{q=1}^Q\alpha_q\right) (\decoder \encoder)\dataMatrix \dataMatrix^T(\decoder\encoder)^T\right).
\end{align*}
Note that we may add constant terms (ie. terms not depending on $\decoder$ or $\encoder$) and multiply by positive scalars without changing the minimising argument. Hence, dividing by $\sum_{q=1}^Q\alpha_q>0$ and adding a constant we see that the minimiser of the above expression is equal to the minimiser of 
\begin{align*}
\tr\left((\decoder\encoder)\dataMatrix \dataMatrix^T(\decoder\encoder)^T\right)+
\tr\left(2(AB)X\left(\sum_{q=1}^Q\tilde{\alpha}_q \targetMatrix_q\right)^T\right)
+ \tr\left(\left(\sum_{q=1}^Q\tilde{\alpha}_q \targetMatrix_q\right)\left(\sum_{q=1}^Q\tilde{\alpha}_q \targetMatrix_q\right)^T\right).
\end{align*}
Moreover, by the linearity of the trace operator this expression is equal to
\begin{align*}
\modL\left(\sum_{q=1}^Q\tilde{\alpha}_q \targetMatrix_q\right)-\decoder \encoder \dataMatrix\modR^2.
\end{align*}
This proves the lemma.
\end{proof}

\begin{proof}[Proposition \ref{FixAjBjforjneqiMinProp}]
We begin observing that if we fix $\decoder_j,\encoder_j$ for $j \neq i$, then minimising $\modularErrorFunction_{\diversityParameter}\left(\modularAutoencoder,\dataset\right)$ is equivalent to minimising 
\begin{align*}
\modL\dataMatrix-\decoder_i\encoder_i\dataMatrix\modR^2&-\diversityParameter \left(1-\frac{1}{\numModules}\right)^2\modL\frac{1}{\numModules-1}\cdot \bm{S}_{-i}-\decoder_i\encoder_i\dataMatrix\modR^2-\\
& \frac{\diversityParameter}{\numModules^2}\sum_{j \neq i}\modL\left(\numModules \decoder_j\encoder_j \dataMatrix-\bm{S}_{-i}\right)-
\decoder_i\encoder_i\dataMatrix \modR^2, 
\end{align*}
where $\bm{S}_{-i} = \sum_{j\neq i }\decoder_j\encoder_j\dataMatrix$. This holds as the above expression differs from $\modularErrorFunction_{\diversityParameter}\left(\modularAutoencoder, \dataset\right)$ only by a multiplicative factor of $NM$ and some constant terms which do not depend upon $\decoder_i,\encoder_i$. 

By Lemma \ref{CompleteSquareTypeLemma}, minimising the above expression in terms of $\decoder_i, \encoder_i$ is equivalent to minimising 
\begin{align} \label{expressionToMinimiseIniSimpleForm}
||\targetMatrix-\decoder_i\encoder_i\dataMatrix||,
\end{align}
with 
\begin{align*}
\targetMatrix  = & \left(1-\lambda\left(\left(1-\frac{1}{\numModules}\right)^2+\frac{\numModules-1}{\numModules^2}\right)\right)^{-1}\\
& \cdot\left(\dataMatrix-\diversityParameter \cdot \left(  
\left(1-\frac{1}{\numModules}\right)^2\frac{1}{\numModules-1} \cdot \bm{S}_{-i} +\frac{1}{\numModules^2}\sum_{j\neq i}\left(\numModules \decoder_j\encoder_j \dataMatrix-\bm{S}_{-i}\right)
\right)\right).
\end{align*}
Here we use the fact that $\diversityParameter <\numModules/(\numModules-1)$, so 
\begin{align*}
1&-\diversityParameter\left(\left(1-\frac{1}{\numModules}\right)^2+\frac{\numModules-1}{\numModules^2}\right)= 1-\diversityParameter \cdot \frac{\numModules-1}{\numModules}>0.
\end{align*}
We may simplify our expression for $\targetMatrix$ as follows,
\begin{align*}
\targetMatrix  = \left(1-\diversityParameter \cdot\frac{\numModules-1}{\numModules}\right)^{-1} \cdot \left(\identity_{\featureVectorDimension} - \frac{\diversityParameter}{\numModules}\sum_{j \neq i}\decoder_j\encoder_j\right)\dataMatrix.
\end{align*}
By Theorem \ref{RankRestrictedLinearReg}, we may minimise the expression in \ref{expressionToMinimiseIniSimpleForm} by taking $\decoder_i$ to be the matrix whose columns consist of the $\featureVectorDimension$ unit eigenvectors with largest eigenvalues for the matrix
\begin{align*}
\left(\identity_{\featureVectorDimension} - \frac{\diversityParameter}{\numModules}\sum_{j \neq i}\decoder_j\encoder_j\right)\left(\dataMatrix \dataMatrix^T\right)\left(\identity_{\featureVectorDimension} - \frac{\diversityParameter}{\numModules}\sum_{j \neq i}\decoder_j\encoder_j\right)^T,
\end{align*}
and setting
\begin{align*}
\encoder_i=\left(1-\diversityParameter \cdot\frac{\numModules-1}{\numModules}\right)^{-1} \cdot \decoder_i^T\left(\identity_{\featureVectorDimension} - \frac{\diversityParameter}{\numModules}\sum_{j \neq i}\decoder_j\encoder_j\right).
\end{align*}
This completes the proof of the proposition.
\end{proof}

\clearpage

\bibliography{modularAEbib} 

\end{document}